\documentclass[12pt,a4paper]{article}
\usepackage{microtype}
\usepackage{graphicx}
\usepackage{verbatim}
\usepackage{xcolor}
\usepackage{subcaption}
\usepackage{hyperref}

\usepackage{soul} % For \st{} (strikethrough text)

\usepackage{empheq}

\usepackage{amsmath}
\usepackage{amssymb}
\usepackage{mathtools}
\usepackage{amsthm}
\usepackage{thmtools}
\usepackage{cancel}
\usepackage[T1]{fontenc}
\usepackage[numbers]{natbib}
\usepackage{cleveref}
\crefformat{section}{\S\hspace{0pt}\textnormal{#2#1#3}}
\crefformat{theorem}{Theorem~\textnormal{#2#1#3}}
\crefformat{corollary}{Corollary~\textnormal{#2#1#3}}
\crefformat{definition}{Definition~\textnormal{#2#1#3}}
\crefformat{figure}{Fig.~\textnormal{#2#1#3}}
\crefformat{example}{Example~\textnormal{#2#1#3}}
\numberwithin{equation}{section}

\usepackage{makecell}
\usepackage{tikz-cd}

\newcommand{\id}{\mathrm{id}}

\newcommand{\SO}{\mathrm{SO}}

\newcommand{\R}{\mathbb{R}}

\newcommand{\Z}{\mathbb{Z}}

\theoremstyle{plain}
\newtheorem{theorem}{Theorem}[section]

\newtheorem{lemma}[theorem]{Lemma}

\theoremstyle{definition}
\newtheorem{definition}[theorem]{Definition}
\newtheorem{example}[theorem]{Example}

\theoremstyle{remark}
\newtheorem{remark}[theorem]{Remark}

\usepackage{enumitem,amssymb}
\newlist{todolist}{itemize}{2}
\setlist[todolist]{label=$\square$}
\usepackage{pifont}
\title{Steerable Neural ODEs on Homogeneous Spaces
}
\date{}

\begin{document}

\maketitle

\vspace*{-10mm}
\begin{center}
\begin{minipage}[t]{0.8\textwidth}
\textbf{Emma Andersdotter} \hfill {\footnotesize \tt emma.andersdotter@umu.se}\\
\textit{\footnotesize Department of Mathematics and Mathematical Statistics\\
Ume{\aa} University\\ Ume{\aa} SE-901 87, Sweden}
\end{minipage}
\end{center}

\begin{center}
\begin{minipage}[t]{0.8\textwidth}
\textbf{Daniel Persson} \hfill {\footnotesize \tt daniel.persson@chalmers.se} \\
\textit{\footnotesize Department of Mathematical Sciences\\ Chalmers University of Technology and Gothenburg University\\Gothenburg SE-412 96, Sweden}
\end{minipage}
\end{center}

\begin{center}
\begin{minipage}[t]{0.8\textwidth}
\textbf{Fredrik Ohlsson}  \hfill {\footnotesize \tt fredrik.ohlsson@umu.se}\\
\textit{\footnotesize Department of Mathematics and Mathematical Statistics\\
Ume{\aa} University\\ Ume{\aa} SE-901 87, Sweden}
\end{minipage}
\end{center}
\vspace*{5mm}

\begin{abstract}
\noindent We introduce \textit{steerable neural ordinary differential equations} on homogeneous spaces $M=G/H$. These models constitute a novel geometric extension of manifold neural ordinary differential equations (NODEs) that transport associated feature vectors transforming under the local symmetry group $H$.
We interpret features as sections of associated vector bundles over $M$, and describe their evolution as parallel transport. This results in a coupled system of ODEs consisting of a flow equation on $M$ and a steering equation acting on features. 
We show that steerable NODEs are $G$-equivariant whenever the vector field generating the flow and the connection governing parallel transport are both $G$-invariant. Furthermore, we demonstrate how steerable NODEs incorporate existing NODE models and continuous normalizing flows on Lie groups. Our framework provides the geometric foundation for learning continuous-time equivariant dynamics of general vector-valued features on homogeneous spaces.
\end{abstract}

\newpage
\tableofcontents

\section{Introduction}\label{sec:introduction}
\subsection{Background and motivation}
\label{sec:intro_background}
Neural ordinary differential equations (NODEs), introduced by Chen et al.~\cite{Chen2018}, are a class of machine learning models in which data is transformed continuously in time according to a learnable vector field, rather than through a finite sequence of discrete layers as in traditional neural networks. Formally, a NODE defines a dynamical system whose flow map transports input data to output data by integrating an ordinary differential equation. This continuous formulation endows NODEs with several attractive properties, including reversibility, parameter sharing across depth, and a natural interpretation as diffeomorphisms of the data space. In addition, NODEs provide a principled way of modelling continuous-time dynamics and are particularly well suited for problems involving irregular sampling, adaptive computation, and dynamical systems.

One of the most influential applications of NODEs is in generative modelling, where they give rise to continuous normalizing flows (CNFs)~\cite{Chen2018}. In this setting, a NODE defines a diffeomorphic change of variables, and probability densities are transformed via the induced push-forward along the flow. CNFs combine the expressivity of deep generative models with exact likelihood evaluation and invertibility, and have been further developed in a number of geometric directions, including group manifolds and quotient spaces. This includes models for generating molecular and protein structures~\cite{Yim2023,Bose2024,Li2024,Geffner2025,Dunn2026}, and applications to lattice gauge theories and physics-inspired models~\cite{Gerdes2024,huang2025learning}.

Many modern machine learning problems involve data that are inherently non-Euclidean and exhibit symmetries, such as data defined on manifolds, graphs, or quotient spaces.  Geometric deep learning aims to incorporate such structures directly into model architectures which respect the underlying geometry and symmetry constraints. From this perspective, neural networks operating on $\mathbb{R}^n$ are a special case, and more general constructions are required to handle data living on curved spaces or transforming under group actions. Lie groups and homogeneous spaces play a central role in this setting, as they provide a natural language for describing continuous symmetries encountered in physics, robotics, and machine learning.

A natural setting for NODEs in geometric deep learning is to consider ODEs on a smooth manifold $M$, rather than $\mathbb{R}^n$, to accommodate non-Euclidean data~\cite{Falorsi2020,Lou2020,Mathieu2020}. The ODE that governs the NODE model is then given by the Cauchy problem
\begin{equation}
\label{eqn:intro_manifold_NODE}
    \frac{d}{dt}\Phi_p(t) = \phi_{\Phi_p(t)} \,, \quad \Phi_p(0) = p \,,
\end{equation}
where $p$ is a point in $M$, $\Phi_p:\mathbb{R}\to M$ is the unique solution curve, and $\phi:M\to TM$ is a learnable vector field, e.g., parametrised by a neural network. The NODE can be interpreted as the diffeomorphism $\psi:M\to M$ defined by $\psi:p\mapsto \Phi_p(1)$, mapping the inputs $p\in M$ to the outputs $\Phi_p(1)\in M$ of the model. The situation where the data exhibits global symmetries under some group $G$ requires the construction of equivariant NODEs, which were obtained in~\cite{Kohler2020} for the Euclidean setting and in~\cite{Katsman2021} for Riemannian manifolds $M$. In our previous work~\cite{Andersdotter2025}, we studied equivariant NODEs on manifolds and established conditions under which invariance of the vector field implies equivariance of the resulting flow. In particular, we showed that equivariant manifold NODEs could be further generalised and parametrised in terms of the differential invariants of the action of $G$. We also proved they are universal approximators when $M$ is connected.

Incorporating data that also exhibits non-trivial transformation properties in the presence of local symmetries requires the introduction of feature fields on $M$. The construction of such feature fields has been studied extensively in the context of group-equivariant convolutional neural networks (CNNs) \cite{Cohen2019b,Cheng2019,Gerken2020}. In this setting, features are not treated as scalar-valued functions, but as objects transforming in representations of the local symmetry group $H$. Seminal work by Cohen and collaborators~\cite{Cohen2019b} showed that group-equivariant CNNs can be formulated in terms of induced representations and, more generally, sections of associated vector bundles over homogeneous spaces. From this viewpoint, convolution operators arise as equivariant maps between spaces of sections, or equivalently, intertwiners between representations of $H$, and steerability is a direct consequence of the representation theory inherent in the associated bundle construction. This bundle-theoretic perspective has proven powerful in the discrete-time setting through the connection between convolutions and equivariant feed-forward layers consistent with the global symmetry~\cite{Kondor2018,Aronsson2022}, and more recently in the context of equivariant transformers and their generalisations~\cite{Nyholm2025}.

However, an analogous framework incorporating local symmetry in continuous-time models based on neural ODEs is still missing.
In particular, while NODEs and CNFs describe actions on scalars and scalar densities, there is no general formulation describing flow-based models acting on general vector- or tensor-valued feature fields in a geometrically consistent way that respects the symmetries of the problem.

\subsection{Summary of results}
\label{sec:intro_summary}
The goal of this paper is to fill the aforementioned gap by introducing \emph{steerable neural ODEs}, a geometric extension of NODEs that simultaneously transports points on a manifold and associated feature fields. Our construction is based on the observation that, in the presence of symmetries, feature fields are naturally described as sections of vector bundles rather than as ordinary functions. In particular, when the underlying manifold is a homogeneous space $M = G/H$, feature vectors transform in representations of the stabiliser subgroup $H$, and the natural geometric setting is that of associated vector bundles over $G/H$. Within this framework, the evolution of feature fields along NODE trajectories in $M$ is governed by a connection on the corresponding principal bundle. Equivariance of the NODE on the base manifold under the global group $G$ induces equivariance of the feature transport, provided that the connection is compatible with the group action. This leads to a notion of an induced action on feature fields, respecting all symmetries of the problem and generalising the push-forward of densities in continuous normalizing flows.

More specifically, we consider neural ODEs on the homogeneous space $M=G/H$, where $G$ is a Lie group $G$ and $H$ a closed Lie subgroup. The Lie group $G$ defines a principal bundle $G \to G/H$ with base manifold $G/H$ and fibre $H$. Feature fields are maps $f:G/H \to V$, i.e., sections of the associated vector bundle $G \times_{\rho} V$ with base manifold $G/H$ and fibre $V$, where $V$ is a vector space equipped with a representation $\rho$ of the stabiliser subgroup $H$. A principal connection $\omega \in \Omega^1(G,\mathfrak{h})$ is a differential 1-form on $G$, taking values in the Lie algebra $\mathfrak{h}$ of $H$, which defines a notion of horizontal tangent vectors in the principal bundle. The solution curve of the NODE can be horizontally lifted to the principal bundle and used to define a notion of horizontality inherited by the associated bundle. The action on a feature field $f$ is then obtained as \textit{parallel transport} along $\Phi_p(t)$ by requiring that the corresponding section of $G \times_{\rho} V$ over $\Phi_p(t)$ is everywhere horizontal. The steerable NODE is defined by imposing horizontality to \textit{steer} the feature vector as it is transported along $\Phi_p(t)$. This amounts to extending $\psi$ to the map $\psi : p \mapsto\Phi_p(1)$ to the map $\Psi : M \times V \to M \times V$ defined by $(p,f(p)) \mapsto (\Phi_p(1),f(\Phi_p(1))$ where $f(\Phi_p(1))$ is obtained through parallel transport. The geometric construction of the steerable NODEs is illustrated in~\Cref{fig:steerable_NODE_conceptual}

\begin{figure}[t]
\centering
\begin{subfigure}{0.48\textwidth}
  \includegraphics[height=4.0cm]{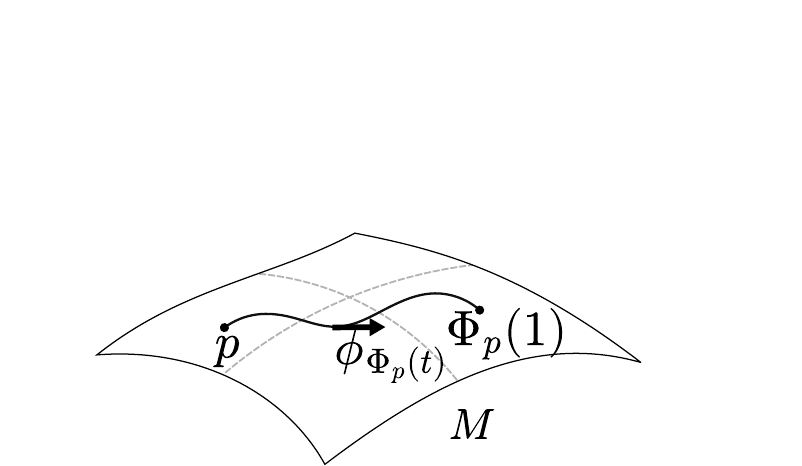}
  \caption{Neural ODE}
  \label{subfig:Classical_NODE}
\end{subfigure}
\hspace{-1cm}
\begin{subfigure}{0.48\textwidth}
  \includegraphics[height=4.0cm]{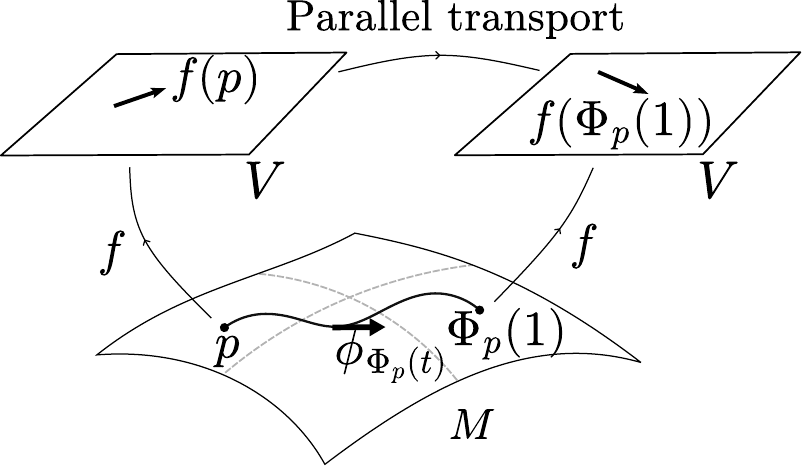}
  \caption{Steerable neural ODE}
  \label{subfig:steerable_NODE}
\end{subfigure}
\caption{In a neural ODE on a homogeneous space $M=G/H$ (\cref{subfig:Classical_NODE}), points $p\in M$ are transported to outputs $\Phi_p(1)$ along the flow governed by a vector field $\phi$. In a steerable NODE (\cref{subfig:steerable_NODE}), this framework is extended to include features that are associated to the points $p$, defined by a feature field $f:M\to V$, where the vector space $V$ carries a representation of $H$. The transformation of the features along the NODE trajectory is governed by parallel transport with respect to a principal connection on $G$. A more detailed illustration of a steerable NODE can be seen in~\cref{fig:Steerable_NODE_illustration}.}
\label{fig:steerable_NODE_conceptual}
\end{figure}

In the steerable NODEs, we treat both the vector field $\phi$ and the connection $\omega$ as learnable quantities. Training the model amounts to learning both the flow in $M = G/H$ and the bundle geometry required to induce the observed transformation of points in $M$ and feature vectors in $V$ attached to those points. In contrast to existing discrete-time models, which probe representation-theoretic aspects of the associated vector bundles, the continuous-time steerable NODEs also probe the geometric structure of the bundles $G$ and $G \times_{\rho} V$ via the connection $\omega$. Consequently, training the steerable NODE not only results in a diffeomorphism approximating the observed transformations of feature vectors, but also provides insight and information regarding the geometry of the underlying problem. In this way, our construction introduces a novel perspective on NODEs in geometric deep learning, incorporating learning of intrinsically geometric structures and broadening the scope to arbitrary feature vectors on homogeneous spaces.

The main contributions we describe in this paper are as follows:
\begin{itemize}
    \item We introduce steerable NODEs on homogeneous spaces $M=G/H$ as parallel transport in associated vector bundles $G \times_{\rho} V$. In~\Cref{dfn:steerable_NODE}, we provide a concrete description of the model as a system of non-linear ODEs; the first describes a flow in $M$, and the second describes the parallel transport steering the feature vector as it is transported along the flow (see~\cref{fig:steerable_NODE_conceptual}).
    \item We show that the steerable NODE is equivariant with respect to the global action of $G$ whenever $\phi$ and $\omega$ are both $G$-invariant (\Cref{thm:equivariance_steerable_NODE_flow}). In~\cref{sec:Wangs_theorem}, we also provide concrete parametrisations of the spaces of invariant vector fields and connections using our previous result from~\cite{Andersdotter2025} and Wang's theorem (\Cref{thm:wang}).
    \item In~\cref{sec:existingNODEs}, we show that existing NODEs on Lie groups are encompassed by steerable NODEs in the special case where $H$ is trivial, and how steerable NODEs extend the modelling capabilities of current flow-based machine learning models. In particular, we demonstrate that steerable NODEs encompass certain continuous normalizing flows for lattice gauge theories studied recently. 
\end{itemize}

\subsection{Organisation of the paper}
\label{sec:intro_organisation}
In~\cref{sec:mathematical_preliminaries}, we review the necessary differential geometric background, including fibre bundles, principal connections, parallel transport, and homogeneous spaces. In particular, we recall the construction of associated vector bundles and the description of sections in terms of Mackey functions and induced representations. We then introduce steerable neural ODEs on homogeneous spaces in ~\cref{sec:steerable_NODEs}. We first reinterpret feature fields as sections of associated bundles and then construct the induced evolution of features via horizontal lifts and parallel transport. This leads to a precise definition of steerable NODEs as coupled flows on base manifolds and associated bundles. In~\cref{sec:induced-equivariance} we prove that invariance of the base vector field together with invariance of the connection implies equivariance of the lifted flow and of the induced feature transport. This establishes sufficient geometric conditions for steerable NODEs to respect global symmetries. Furthermore, we apply Wang’s theorem~\cite{Wang1958} to classify $G$-invariant principal connections on $G \to G/H$. Together with previous results on the classification of invariant base vector fields using differential invariants, this provides an explicit parametrisation of all admissible equivariant steerable NODE models on $G/H$. Finally, in~\cref{sec:existingNODEs}, we discuss the relation of our framework to existing NODE models, including continuous normalizing flows and previously studied equivariant NODE constructions.
We conclude in~\cref{sec:conclusions} with directions for future work.

\section{Mathematical preliminaries}\label{sec:mathematical_preliminaries}
The construction of steerable NODEs presented in the following sections relies on concepts from differential geometry, specifically the notions of fibre bundles and homogeneous spaces. In this section, we recall the relevant definitions and properties of these geometric constructs, assuming throughout that the manifolds and maps considered are all smooth. Proofs and detailed derivations are omitted for brevity, but can be readily found in standard texts on differential geometry such as~\cite{Kobayashi1963,Nakahara2003,Lee2012}. The reader who is already familiar with these topics can skip to \cref{sec:steerable_NODEs}.

\subsection{Fibre bundles}\label{sec:fibre_bundles}
Intuitively, a fibre bundle is a base manifold $M$ with copies of a fibre manifold $F$ attached to each point $p\in M$ and glued together in a possibly twisted manner. The canonical examples are provided by the cylinder $S^1 \times [0,1]$ and the Möbius strip, which have the same local structure -- both are locally homeomorphic to $\mathbb{R}^2$ -- but are globally different due to the twist in the fibre $[0,1]$ over the base manifold $S^1$ in the Möbius strip. The definition of a fibre bundle formalises this notion of a space that is locally, but not globally, a product space.

\begin{definition}[Fibre bundle]\label{dfn:fibre_bundle}
    A \textit{bundle of manifolds} is a triple $(E,\pi,M)$, denoted $E\overset{\pi}{\to}M$, where $E$ and $M$ are manifolds referred to as the \textit{total space} and the \textit{base space}, respectively, and $\pi:E\to M$ is a surjective map called the \textit{projection}. The \textit{fibre} $E_p$ at a point $p\in M$ is the pre-image of the projection, $E_p =\pi^{-1}(p)$. A \textit{fibre bundle} with typical (or canonical) fibre $F$ is a bundle $E\overset{\pi}{\to}M$ together with a local trivialization $\{U_i,\phi_i\}$, where $\{U_i\}$ is an open cover of $M$ and $\phi_i:\pi^{-1}(U_i)\to U_i\times F$ is a diffeomorphism such that $\text{proj}_1\circ\phi_i=\pi$ on $\pi^{-1}(U_i)$.
\end{definition}

A section of a fibre bundle is a map $\sigma : M \to E$ assigning to each point $p$ in $M$ a point in the corresponding fibre $E_p$. In general, a fibre bundle admits no global sections due to topological obstructions. However, local sections obtained by restricting the domain of $\sigma$ to the open sets $U_i$ in $M$ always exist.

\begin{definition}[Section of a bundle]\label{dfn:section_bundle}
    A \textit{section} of a fibre bundle $E\overset{\pi}{\to}M$ is a map $\sigma:M\to E$ such that $\pi\circ\sigma(p)=p$ for all $p\in M$.
\end{definition}

We are particularly interested in fibre bundles admitting a Lie group of local symmetries acting on the total space. Throughout, we will use the notation $L_g$ and $R_g$ to refer to the left and right actions, respectively, of a group element $g\in G$. For the situations where the action is on points on a manifold, we will sometimes use juxtaposition for brevity.

\begin{definition}[Principal bundle]\label{dfn:principal_bundle}
    A \textit{principal $H$-bundle} is a fibre bundle $E\overset{\pi}{\to}M$ together with a Lie group $H$ and a free right action of $H$ on $E$, $E\times H\to E$, which is fibre-preserving (meaning that $\pi(uh) = \pi(u)$ for any $u \in E$ and $h \in H$) and transitive on each fibre.
\end{definition}

The fibres of a principal $H$-bundle are the $H$-orbits, all of which are diffeomorphic (but not isomorphic) to the group $H$. Given a principal $H$-bundle $E$ and a representation $\rho$ of the symmetry group $H$, it is possible to define a vector bundle which inherits the topology and local symmetry of $E$.

\begin{definition}[Associated vector bundle]\label{dfn:associated_bundle}
    Let $E\overset{\pi}{\to}M$ be a principal $H$-bundle, let $V$ be a vector space, and let $\rho:H \to \mathrm{GL}(V)$ be a representation of $H$. Let $\sim$ be the equivalence relation on $E\times V$ defined by $(u,v)\sim (uh,\rho(h^{-1})\,v)$ for any $h\in H$, and denote the corresponding equivalence classes by $[u,v]=\{(uh,\rho(h^{-1})v)\,|\,h\in H\}$. The \textit{associated vector bundle} is then defined as the space
    \begin{equation}
        E \times_{\rho} V := (E\times V)/\sim\,,
    \end{equation}
    equipped with the projection operator $\pi_{\rho}: E \times_{\rho} V \to M$ defined by $\pi_{\rho}([u,v]) = \pi(u)$.
\end{definition}

By construction, the projection is independent of the representative of the equivalence class, since $\pi_{\rho}([uh,\rho(h^{-1})v]) = \pi(uh) = \pi(u) = \pi_{\rho}([u,v])$, and the fibre of $E \times_{\rho} V$ is diffeomorphic to $V$. 

A section of the associated vector bundle can be expressed in terms of a section $\sigma$ of $E$ and a vector field $f:M \to V$ on $M$ as $s(p) = \left[\sigma(p),f(p)\right]$. Under a change of the principal section of the form $\sigma'(p) = \sigma(p)h$ for some $h \in H$, the vector $f(p)$ transforms according to the representation $\rho$ of the local symmetry group $H$, $[\sigma'(p),f(p)] = [\sigma(p)h,f(p)] = [\sigma(p),\rho(h)f(p)]$.

Fixing the section $\sigma$ amounts to choosing a reference in $V$. This can be made manifest through the canonical local trivialisation $\varphi$ defined as follows.

\begin{definition}[Canonical local trivialization]\label{dfn:canonical_local_trivialization}
    The \textit{canonical local trivialisation} $\varphi:M\times V\to E\times_\rho V$ relative to the section $\sigma$ of the associated vector bundle is given by
    \begin{equation}
        \varphi(p,v)=[\sigma(p),v],
    \end{equation}
    where $p\in M$ and $v\in V$.
\end{definition}
The canonical local trivialisation $\varphi$ is a (local) diffeomorphism by construction.

\subsection{Connections and parallel transport}\label{sec:connections_parallel_transport}
In order to relate geometric information defined at different points on $M$, we need a construction relating the fibres of a bundle over different points in the base manifold $M$. For principal bundles, this is provided by a connection, which defines a notion of parallel transport of a point in the total space of the bundle along a curve $M$. This notion is then inherited by the associated vector bundles.

In order to define parallel transport rigorously for a principal $H$-bundle $E\overset{\pi}{\rightarrow}M$, we first need to introduce precise notions of vertical and horizontal directions in the total space $E$. The vertical subspace $V_uE \subset T_uE$ is the component  of the tangent space $T_uE$ at $u \in E$ along the fibre $E_p$ over $p = \pi(u)$,
\begin{equation}
    V_uE := \ker \, (\pi_u)_* = \{X \in T_uE\,|\,(\pi_u)_*(X) = 0\}\,.
\end{equation}
The vertical subspace can be explicitly constructed using the right action of $H$. Let $A\in\mathfrak{h}$ be an element of the Lie algebra $\mathfrak{h}$ of $H$ and define a curve $\Gamma_u$ on $E$ through $u$ by
\begin{equation}
    \Gamma_u(t)=u\exp{(tA)} \,.
\end{equation}
The curve is contained in the fibre at $p$, $\Gamma_u \subset E_p$, since $\pi(u\exp{(tA)}) = \pi(u) = p$, and its tangent defines the \textit{fundamental vector field} $A^\#$ generated by $A$,
\begin{equation}
    A^\#(u)=\left.\frac{d}{dt}u\exp{(tA)}\right|_{t=0}=\dot{\Gamma}_u(0) \,,
\end{equation}
where $\dot{\Gamma}$ denotes the derivative with respect to $t$. Since $\pi\circ\Gamma_p$ is constant with respect to $t$, we have
\begin{equation}
    \pi_*(A^\#(u))=\left.\frac{d}{dt}\pi(\Gamma_u(t))\right|_{t=0}=0
\end{equation}
for any $u\in E$. Consequently, $A^\#(u) \in V_uE$ and $A^\#$ is a vertical vector field on $E$. In fact, the fundamental vector fields provide an isomorphism of vector spaces $\mathfrak{h} \cong V_uE$.

The horizontal subspace $H_uE \subset T_uE$ is the complement of the vertical subspace $V_uE$. However, unlike the vertical subspace, it is not uniquely determined by the bundle topology alone but requires additional geometric structure in the form of a connection that describes how vectors in the tangent space $T_uE$ are projected onto the vertical subspace.

\begin{definition}[Principal connection]\label{def:connection_1_form}
    A (Ehresmann) \textit{connection} 1-form $\omega\in\Omega^1(E,\mathfrak{h})$ on the principal $H$-bundle $E$ is a projection onto the vertical component $V_uE$ satisfying the following requirements:
    \begin{itemize}
        \item[(i)] $\omega(A^\#(u))=A$, for all $u\in E$ and $A\in\mathfrak{h}$. 
        \item[(ii)] $R_h^*\omega=Ad_{h^{-1}}\omega$, for all $h\in H$.
    \end{itemize}
\end{definition}

Given a connection 1-form $\omega$, the horizontal subspace $H_uE$, for any $u\in E$, is defined by
\begin{equation}
    H_uE:= \ker \omega_u = \{X\in T_uE\,|\,\omega_u(X)=0\}\,.
\end{equation}
It follows that the connection defines a separation of the tangent space $T_uE=V_uE\oplus H_uE$ such that any smooth vector field $X$ on $E$ is decomposed into smooth components $X=X^H+X^V$, where $X^H_u\in H_uE$ and $X^V_u\in V_uE$. Moreover, the separation is equivariant with respect to the right action of $H$, $H_{uh}E = R_{h*}H_uE$ for all $u\in E$ and $h\in H$.

The connection $\omega$ provides the desired notion of transporting geometric information along a curve in the base manifold $M$. Starting with the principal bundle, a curve $\gamma$ in $M$ can be lifted to $E$ in a way that respects the decomposition of $T_uE$ induced by $\omega$ by requiring that its tangent vector remains in $H_uE$ at each point.

\begin{definition}[Horizontal lift]\label{dfn:horizontal_lift}
    A \textit{horizontal lift} of a curve $\gamma:I\to M$ is a curve $\tilde{\gamma}:I\to E$ such that $\pi\circ\tilde{\gamma}=\gamma$ and $\dot{\tilde{\gamma}}(t)\in H_{\tilde{\gamma}(t)}E$ for all $t\in I$.
\end{definition}

The horizontal lift can be expressed in terms of a local section $\sigma$ as $\tilde{\gamma}(t) = \sigma(\gamma(t))h(t)$, where $h: \mathbb{R} \to H$. In terms of the connection, the condition defining the horizontal lift $\tilde{\gamma}$ is $\omega(\dot{\tilde{\gamma}}(t))=0$, or equivalently,
\begin{equation}\label{eqn:horizontal_lift_ode}
    \frac{d}{dt}h(t)=-(R_{h})_*\left[ \omega\left(\sigma_{*}\left(\frac{d}{dt}\gamma(t)\right)\right) \right]\,.
\end{equation}
Note that the induced right action $(R_{h})_*$ is acting on the vector in $\mathfrak{h}$ obtained by evaluating $\omega(\sigma_{*}\dot{\gamma}(t))$. This is an ODE for $h(t)$, and standard ODE theory guarantees the existence and uniqueness of a horizontal lift given an initial condition $\tilde{\gamma}(0) \in \pi^{-1}(\gamma(0))$. Since the horizontal subspaces are equivariant, $(R_h)_*H_uE=H_{uh}E$, $H$ acts transitively on the right on the horizontal lifts of $\gamma$ by shifting the initial condition. The point $\tilde{\gamma}(t)$ is called the \textit{parallel transport} of $\tilde{\gamma}(0)$ along $\gamma$. 

The horizontal lift of curves can be extended to a flow $\Phi:\R \times M \to M$ generated by a vector field $\phi: M \to TM$,
\begin{equation}
    \frac{d}{dt} \Phi(t,p) = \phi_{\Phi(t,p)} \,, \quad \Phi(0,p) = p \,.
\end{equation}
Given a fixed connection $\omega$, there is a unique lift of $\phi$ to a vector field $\tilde{\phi}: E \to TE$ which is horizontal, $\omega(\tilde{\phi})=0$, $H$-invariant, $(R_h)_*\tilde{\phi}=\tilde{\phi}$ for all $h \in H$, and satisfies the defining property $\pi_*(\tilde{\phi}) = \phi$ of a lift~\cite{Bleecker1981}. The flow $\tilde{\Phi}:\R \times E \to E$ generated by $\tilde{\phi}$ gives the horizontal lifts of the integral curves of $\phi$ and is equivariant under $H$ by construction, $\tilde{\Phi}(t,u)h = \tilde{\Phi}(t,uh)$ for all $h \in H$. 

Parallel transport in the principal bundle induces a notion of parallel transport in the associated vector bundle in the following way. A section of $E \times_{\rho} V$ along a curve $\gamma:\mathbb{R}\to M$ can always be expressed in terms of the horizontal lift as
\begin{equation}
    s(\gamma(t))=[(\tilde{\gamma}(t),\eta(\gamma(t))],
\end{equation}
where $\tilde{\gamma}$ is the horizontal lift of $\gamma$ and $\eta(\gamma(t))\in V$ is the fibre component. The \textit{covariant derivative} of the section $s$ along $\gamma$ describes how the value of the section changes relative to parallel transport defined by the connection.

\begin{definition}\label{dfn:parallel_transport_associated}
    Let $\phi:M \to TM$ be a vector field and $\gamma:\R \to M$ a curve with $\dot{\gamma}(t)=\phi$. Let $s(\gamma(t)) = [\tilde{\gamma}(t),\eta(\gamma(t))]$ be a section of the associated bundle $E \times_{\rho} V$. Then the \textit{covariant derivative} of $s$ along $\gamma(t)$ is given by
    \begin{equation}\label{eqn:covariant_derivative}
        (\nabla_{\phi}s)(\gamma(t)):=\left[\left(\tilde{\gamma}(t),\frac{d}{dt}\eta(\gamma(t))\right)\right].
    \end{equation}
\end{definition}

We note that the definition of the covariant derivative is independent of the section $\sigma$ used to express the horizontal lift of $\gamma(t)$. Furthermore, since all possible horizontal lifts of $\gamma(t)$ are related by the transitive right action of $H$, the covariant derivative in~\eqref{eqn:covariant_derivative} is, in fact, also independent of the choice of lift or equivalently, the initial condition $\tilde{\gamma}(0)$. The section $s$ undergoes \textit{parallel transport} along $\gamma$ if $(\nabla_{\phi}s)(\gamma(t))=\left[\left(\tilde{\gamma}(t),0\right)\right]$ for all $t$. 

\subsection{Homogeneous spaces}
In the presence of a global Lie group $G$ of symmetries acting transitively on a manifold $M$ on the left, $G \times M \to M$, the manifold can be described as the \textit{homogeneous space} $M = G/H$, where $H$ is the stabiliser Lie subgroup of the action of $G$. Such spaces carry a natural bundle structure, allowing for the machinery of the previous principal and associated bundles over $M$ to be applied.

\begin{example}\label{ex:G_as_principal_bundle}
    Consider the manifold $M = G/H$, where $G$ is a Lie group and $H$ a closed Lie subgroup. Let $\pi:G \to G/H$ be the map defined by $\pi(g)=gH$ for each $g\in G$, that is, the map sending each element $g\in G$ to its corresponding coset. Then $G\overset{\pi}{\to}G/H$ is a fibre bundle with typical fibre $H$; in fact, it is an example of a principal $H$-bundle since $\pi(gh) = gHh = gH = \pi(g)$ and $H$ acts transitively on $\pi^{-1}(gH)$.
\end{example}

The groups $G$ and $H$ serve different geometrical roles. The local right action of $H$ preserves the fibre, while the global left action of $G$ acts transitively, meaning it moves points from one fibre to another. The sections of vector bundles associated to $G\overset{\pi}{\to}G/H$ simultaneously accommodate both these actions in a consistent way.

\begin{example}\label{dfn:associated_bundle_homogeneous}
    Let $V$ be a vector space and $\rho:H \to \mathrm{GL}(V)$ a representation of $H$. Let $G \times_{\rho} V$ be the associated vector bundle with elements $[g,v]=\{(gh,\rho(h^{-1})v)\,|\,h\in H\}$, $(g,v)\in G \times V$, and bundle projection $\pi_{\rho}([g,v])=\pi(g)=gH$. The base space of $G \times_{\rho} V$ is $G/H$ and its typical fibre is $V$. Given a section $\sigma$ of $G$, a section $s$ of $G \times_{\rho} V$ can be expressed as $s(p) = [\sigma(p),f(p)]$, where $f: M \to V$.
\end{example}

The fact that the value of the section $s: M \to V$ is measured relative to a section $\sigma$ of the principal bundle can be made manifest using an alternative perspective available for the case of principal bundles over homogeneous spaces.

\begin{definition}[Mackey function]\label{dfn:mackey_fnc}
    Let $G$ be a Lie group with a closed Lie subgroup $H$. Let $V$ be a vector space $V$ and let $\rho:H\to \mathrm{GL}(V)$ be a representation of $H$. A \textit{Mackey function} is a map $k:G\to V$ satisfying $k(gh)=\rho(h^{-1})k(g)$ for all $h\in H$ and $g\in G$.
\end{definition}

The space $\mathcal{I}$ of Mackey functions is a vector space isomorphic to the space of sections of $G\times_{\rho}V$~\cite{Kolar2013}. Concretely, a section of $G \times_{\rho} V$ can be written as $[\sigma(p),f(p)]$, where $f = k \circ \sigma$ and $k:G \to V$ is a Mackey function. The action of $H$ on $V$ through the representation $\rho$ can be extended to an action of $G$ through the \textit{induced representation} $\mathrm{Ind}^G_H \rho : G \to \mathrm{End}(\mathcal{I})$ defined by $\mathrm{Ind}^G_H \rho (g) k(g') = k(g^{-1} g')$, for $g,g'\in G$. This provides a well-defined notion of a global action of $G$ on the vector-valued sections of $G \times_{\rho} V$.

\section{Steerable neural ODEs on homogeneous spaces}\label{sec:steerable_NODEs}
In this section, we introduce \textit{steerable} neural ODEs on homogeneous spaces $M=G/H$ as a geometric extension of manifold neural ODEs (NODEs) incorporating general feature fields using parallel transport. Our construction draws on the concepts of differential geometry introduced in \cref{sec:mathematical_preliminaries}, in particular horizontal lifts and their properties. The resulting model represents a novel approach to flow-based machine learning models that accommodates features with non-trivial transformation properties under the local action of the stabiliser subgroup $H$.

\subsection{Motivation and general setup}\label{sec:motivation_and_general_setup}
We recall the definition of manifold NODEs~\cite{Falorsi2020,Lou2020,Mathieu2020}, i.e., NODEs that transform points in a manifold $M$ through the diffeomorphism defined by a flow on $M$.

\begin{definition}[Manifold NODE]\label{dfn:mfd_NODE}
    Let $\phi:M\to TM$ be a learnable vector field on $M$. Then for every point $p\in M$, there exists a unique curve $\Phi_p:\mathbb{R}\to M$ that solves the Cauchy problem
    \begin{equation}\label{eqn:mfd_NODE}
        \frac{d}{dt}{\Phi}_p(t) = \phi_{\Phi_p(t)} \,, \quad \Phi_p(0) = p \,.
    \end{equation}
    The \textit{manifold NODE} on $M$ is the diffeomorphism $\psi:M\to M$ defined by $\psi(p) = \Phi_p(1)$. The point $p\in M$ is the \textit{input} and $\psi(p)\in M$ is the \textit{output} of the model. 
\end{definition}

A NODE can, therefore, be thought of as a curve on a manifold $M$ whose tangent vector at each point is determined by a machine learning model. Geometrically, it is convenient to identify the manifold NODE with the flow $\Phi: \R \times M \to M$ obtained by collecting all integral curves $\Phi_p(t)$ of $\phi$, i.e., $\Phi(t,p)=\Phi_p(t)$.

Our objective is to generalise manifold NODEs so that each point on the manifold $M$ carries a feature valued in a vector space $V$, and to describe how these features change as the points move with the flow on $M$ generated by $\phi$. In the presence of a group $G$ of global symmetries acting on $M$, we also require that the features transform in a consistent way. Previous work on symmetries of manifold NODEs (e.g.,~\cite{Kohler2020,Katsman2021,Andersdotter2025}) considered the global action of $G$ that translates points in $M$. When features are attached to points of $M$, these features may also transform under an internal symmetry group. For example, a tangent vector at $p\in M$ can be rotated while remaining in the tangent space $T_pM$.

We assume that $G$ is a Lie group acting smoothly on $M$. To ensure that the action does not distinguish between points in $M$, we also assume that it is transitive, making $M$ into a homogeneous $G$-space, $M=G/H$, with $H$ a closed subgroup of $G$ (see~\cref{sec:mathematical_preliminaries}). This setting naturally provides a notion of local transformations of the feature vectors, through a representation $\rho: H \to \mathrm{GL}(V)$ of the stabiliser subgroup $H$, which is consistent with the global action of $G$.

In order to define an action on feature fields induced by the flow in $M=G/H$, we proceed by leveraging the geometric perspective on feature fields on $M$ as sections of associated bundles.

\subsection{Feature fields and induced representations}\label{sec:feature_fields_and_induced_representations}
We first give a rigorous geometric definition of a feature field, i.e., a smooth map $f:M \to V$ assigning to each point $p$ in the homogeneous space $M=G/H$ an element in some vector space $V$. Since such fields must be compatible with the global action of $G$ on $M$ and with the local action of the stabiliser subgroup $H$, we seek a vector bundle over $M$ that encodes these symmetries. The bundle with these properties is precisely the associated vector bundle $G \times_{\rho} V$ defined in~\cref{dfn:associated_bundle_homogeneous}, which inherits the global action of $G$ from the principal bundle $G \overset{\pi}{\to} G/H$ and a local action of $H$ through the representation $\rho: H \to \mathrm{GL}(V)$. Under this construction, feature fields are naturally identified with smooth sections of the associated bundle.

\begin{definition}[Feature fields]\label{dfn:feature_fields}
    Let $M=G/H$ be a homogeneous space and $V$ be a vector space carrying a representation $\rho:H \to \mathrm{GL}(V)$ of $H$. A \textit{feature field} on $M$ is a section of the associated vector bundle $G \times_{\rho} V$. 
\end{definition}

In applications, it is often convenient to work with a local description of a feature field. We therefore introduce an equivalent description in terms of local sections of the principal bundle and Mackey functions.
\begin{definition}[Local feature fields]\label{dfn:local_feature_fields}
    Let $M=G/H$ be a homogeneous space and let $\sigma:M\to G$ be a local section of the principal bundle $G\overset{\pi}{\to}M$. A \textit{local feature field} relative to $\sigma$ is a map $f:M\to V$ defined by $f=k\circ\sigma$, where $k:G\to V$ is a Mackey function defined in~\cref{dfn:mackey_fnc}.
\end{definition}

\begin{restatable}{lemma}{MackeyLemma}\label{lem:feature_fields}
    \Cref{dfn:feature_fields,dfn:local_feature_fields} are equivalent, establishing a one‑to‑one correspondence between Mackey functions and sections of the associated vector bundle.
\end{restatable}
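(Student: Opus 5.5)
The plan is to construct explicit maps in both directions between the space $\mathcal{I}$ of Mackey functions and the space of sections of $G\times_\rho V$, and to check that they are mutually inverse. From a Mackey function $k:G\to V$ we define a section $s_k$ of $G\times_\rho V$ by
\begin{equation*}
    s_k(gH) := [g,k(g)] \,.
\end{equation*}
The first step is well-definedness. If $g' = gh$ represents the same coset, then
\begin{equation*}
    [gh,k(gh)] = [gh,\rho(h^{-1})k(g)] = [g,k(g)]
\end{equation*}
by the equivalence relation in \cref{dfn:associated_bundle}. It is a section because $\pi_\rho([g,k(g)])=gH$. For smoothness we use local sections: over an open set $U$ with a local section $\sigma$, we have $s_k(p)=[\sigma(p),k(\sigma(p))]=\varphi(p,f(p))$ with $f=k\circ\sigma$. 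In the canonical local trivialisation of \cref{dfn:canonical_local_trivialization}, $s_k$ is therefore the graph of the smooth map $f$. This gives the link to \cref{dfn:local_feature_fields}: locally, every such section is exactly a local feature field relative to $\sigma$.

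For the reverse direction, given a section $s$ we define $k_s:G\to V$ by requiring $s(\pi(g)) = [g,k_s(g)]$. The key fact is that, for fixed $g$, the map $V\to (G\times_\rho V)_{\pi(g)}$ given by $v\mapsto[g,v]$ is a bijection. It is surjective because every element of the fibre has the form $[gh,w]=[g,\rho(h)w]$. It is injective because $[g,v]=[g,w]$ forces $(g,w)=(gh,\rho(h^{-1})v)$ for some $h$, and freeness of the right action then gives $h=e$, hence $v=w$. Thus $k_s$ is well defined. The Mackey property follows from
\begin{equation*}
    [gh,k_s(gh)] = s(\pi(g)) = [g,k_s(g)] = [gh,\rho(h^{-1})k_s(g)] \,,
\end{equation*}
together with injectivity at the point $gh$.

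Smoothness of $k_s$ is the step I expect to be the main obstacle. Over a trivialising set $U$ with local section $\sigma$, every $g\in\pi^{-1}(U)$ can be written uniquely as $g=\sigma(\pi(g))\,h(g)$, with $h(g)$ depending smoothly on $g$; this uses that $G\to G/H$ is a principal bundle as in \cref{ex:G_as_principal_bundle}. We then have
\begin{equation*}
    k_s(g)=\rho(h(g)^{-1})\,f(\pi(g)) \,,
\end{equation*}
where $f$ is defined by $s=\varphi(\cdot,f)$. This expression is smooth because $f$, $h$ and $\rho$ are all smooth.

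Finally, the two constructions are inverse to each other by the uniqueness statement above: $k_{s_k}=k$ and $s_{k_s}=s$. Both maps are linear with respect to the pointwise vector space structures, so they give the claimed one-to-one correspondence and an isomorphism of vector spaces. Restricting to a local section $\sigma$ recovers \cref{dfn:local_feature_fields} via $f=k\circ\sigma$, which establishes the equivalence of the two definitions.
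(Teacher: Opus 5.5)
Your proof is correct, but for the converse direction it takes a different, more intrinsic route than the paper. The direction from Mackey functions to sections is the same computation in both: you check independence of the representative $g$ of $gH$, and the paper checks independence of the local section $\sigma$.

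For the converse, the paper fixes a cover by local sections $\sigma_i$ and \emph{defines} $k(g)=\rho(h_i(g)^{-1})v_i(\pi(g))$ separately on each $\pi^{-1}(U_i)$. It then has to check agreement on overlaps through the transition functions $c_{ij}$, verify the Mackey property, and prove uniqueness as a separate step. You instead first show that $v\mapsto[g,v]$ is a bijection of $V$ onto the fibre over $\pi(g)$, and then define $k_s$ globally by $s(\pi(g))=[g,k_s(g)]$. Well-definedness, the Mackey property, uniqueness and the identities $k_{s_k}=k$, $s_{k_s}=s$ all follow from injectivity of this one map. The paper's defining formula appears in your argument only as a local expression used to prove smoothness, where no gluing is needed.

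Your version buys several things. It is shorter and independent of any choice of cover. It avoids the cocycle bookkeeping, which is easy to get wrong: the relations $v_j=\rho(c_{ij}^{-1})v_i$ and $h_i=c_{ij}h_j$ derived in the paper actually correspond to $\sigma_j=\sigma_i c_{ij}$ rather than the stated $\sigma_i=\sigma_j c_{ij}$, though this does not affect the conclusion. It also includes an explicit linearity check, so you obtain the vector space isomorphism and not just a bijection.

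The paper's version buys an explicit description of how local feature fields in different gauges $\sigma_i,\sigma_j$ are related by transition functions. That is the data the paper later appeals to for gauge changes and for extending the local model globally.
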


\begin{proof}
    See~\cref{app:proof_of_equiv_feat_fields}.
\end{proof}

It follows from~\Cref{lem:feature_fields} that, given a local section $\sigma:M \to G$ of the principal bundle, any section $s:M \to G\times_\rho V$ (i.e., feature field) of the associated bundle admits the local expression
\begin{equation}
    s(p) = \left[\sigma(p),f(p)\right] = \left[\sigma(p),k(\sigma(p))\right]\,,
\end{equation}
where $p\in M$, $k:G\to V$ is a uniquely determined Mackey function and $f=k\circ\sigma$ is the local feature field in~\cref{dfn:local_feature_fields}. The section $s$ is independent of the choice of $\sigma$ -- a property inherited from $k$.

While the point $[\sigma(p),k(\sigma(p))]$ in the total space of the bundle $G \times_{\rho} V \to M$ is independent of $\sigma$, the vector in the fibre $V$ corresponding to this point is not. Indeed, the vector $f(p)$ is measured relative to the section $\sigma$, and fixing $\sigma$ amounts to choosing a reference in $V$. This can be done through the canonical local trivialization introduced in \cref{dfn:canonical_local_trivialization}, which is the map $\varphi:M\times V\to G\times_\rho V$ defined by
\begin{equation}
        \varphi(p,v)=[\sigma(p),v],
\end{equation}
where $p\in M$ and $v\in V$. Since the map $\varphi$ is a (local) diffeomorphism, the section $s$ can also be uniquely identified with the map $\varphi^{-1}\circ s:M\to M\times V$ given by
\begin{equation}
    (\varphi^{-1}\circ s)(p)=(p,f(p)).
\end{equation}

Before proceeding, we need to describe the action of the global symmetry group $G$ on sections of the associated bundle $G \times_{\rho} V$. The left action of $G$ on $G \times_{\rho} V$ is required to preserve linearity of the vector part, which forces $L_g[g',v] = [gg',v]$ for all $g,g' \in G$. Since the projection $\pi: G \to G/H$ commutes with the left action, and $H$ acts transitively on the fibres, there is a unique element\footnote{The map $c : G \times M \to H$ is called the \textit{cocycle} associated with $\sigma$. See~\cite{Cohen2019b} for a more detailed discussion.} $c(g,p) \in H$ for every $g \in G$ and $p \in M$ satisfying
\begin{equation}\label{eq:cocycle}
    L_g\sigma(p) = \sigma(gp)c(g,p)\,.
\end{equation}
The following lemma shows that the action of $G$ on feature fields is given by the induced representation on Mackey functions.

\begin{lemma}
    The action of $G$ on $G \times_{\rho} V$ induces an action on sections $s$ of the form $s(p)=[\sigma(p),k(\sigma(p))]$, which is given by the induced representation acting on the Mackey function $k$.
\end{lemma}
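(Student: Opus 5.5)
The plan is to define the action of $G$ on sections by transporting the section with the bundle action and then relabelling base points, and to show that the resulting Mackey function is $\mathrm{Ind}^G_H\rho(g)k$. Concretely, for $g\in G$ and a section $s$, I set
\begin{equation*}
    (g\cdot s)(p) := L_g\, s(g^{-1}p)\,,
\end{equation*}
which is again a section, because $\pi_\rho(L_g[g',v]) = gg'H = g\,\pi_\rho([g',v])$. I will also check once that $L_g[g',v]=[gg',v]$ is well defined on equivalence classes: the representative $(g'h,\rho(h^{-1})v)$ maps to $(gg'h,\rho(h^{-1})v)\sim(gg',v)$.

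Next I compute the transformed section in the local form. Write $s(p)=[\sigma(p),k(\sigma(p))]$. Then
\begin{equation*}
    (g\cdot s)(p) = [g\,\sigma(g^{-1}p),\,k(\sigma(g^{-1}p))]\,.
\end{equation*}
Applying the cocycle relation~\eqref{eq:cocycle} with $g$ and the point $g^{-1}p$ gives $g\,\sigma(g^{-1}p)=\sigma(p)\,c(g,g^{-1}p)$. Setting $h=c(g,g^{-1}p)\in H$ and using $[\sigma(p)h,v]=[\sigma(p),\rho(h)v]$, this becomes
\begin{equation*}
    (g\cdot s)(p)=[\sigma(p),\rho(h)\,k(\sigma(p)h^{-1}g)]\,,
\end{equation*}
where I used $\sigma(g^{-1}p)=g^{-1}\sigma(p)h$.

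By the Mackey property, $k(g^{-1}\sigma(p)h)=\rho(h^{-1})k(g^{-1}\sigma(p))$, so the two factors of $\rho$ cancel and
\begin{equation*}
    (g\cdot s)(p)=[\sigma(p),k(g^{-1}\sigma(p))]=[\sigma(p),(\mathrm{Ind}^G_H\rho(g)k)(\sigma(p))]\,.
\end{equation*}
It remains to verify that $k'=\mathrm{Ind}^G_H\rho(g)k$ is itself a Mackey function. This holds because left translation commutes with right translation: $k'(g'h)=k(g^{-1}g'h)=\rho(h^{-1})k'(g')$. By the uniqueness in \Cref{lem:feature_fields}, $k'$ is therefore the Mackey function of $g\cdot s$, and in particular the result does not depend on $\sigma$. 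The homomorphism property follows directly from the definition, since $L_{g_1}L_{g_2}=L_{g_1g_2}$.

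The main obstacle is bookkeeping rather than substance. I need to evaluate the cocycle at the correct point, namely $g^{-1}p$ and not $p$, and track where $h$ versus $h^{-1}$ appears so that the $\rho$ factors cancel. A cleaner way to sidestep this is to work directly with the global description $s(\pi(g'))=[g',k(g')]$, which is well defined by the Mackey property. Then
\begin{equation*}
    (g\cdot s)(\pi(g'))=[g\,g^{-1}g',\,k(g^{-1}g')]=[g',k(g^{-1}g')]\,,
\end{equation*}
which gives the result immediately. I would present this version as the main argument and keep the section-based computation above as a consistency check with the cocycle.
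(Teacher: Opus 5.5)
Your proposal is correct, and your section-based computation is the paper's proof up to a relabelling of the base point. The paper never defines the transformed section explicitly. Instead it computes $L_g s(p)$ as a point over $gp$, rewrites $L_g\sigma(p)=\sigma(gp)c(g,p)$, and uses the Mackey property to absorb $\rho(c(g,p))$, arriving at $[\sigma(gp),k(g^{-1}\sigma(gp))]$. Your $(g\cdot s)(p)=L_g s(g^{-1}p)$ gives the same identity with $p$ replaced by $g^{-1}p$, which is why your cocycle sits at $g^{-1}p$. There is one slip: in your displayed formula the argument of $k$ should be $g^{-1}\sigma(p)h$, not $\sigma(p)h^{-1}g$. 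Your next line uses the correct expression, so nothing breaks.

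The argument you want to lead with, via $s(\pi(g'))=[g',k(g')]$, is a genuinely different and cleaner route. It uses no cocycle and no local section, so independence of $\sigma$ and the representation property are manifest, and the $h$ versus $h^{-1}$ bookkeeping disappears. It also avoids the tacit restriction to points where both $\sigma(p)$ and $\sigma(gp)$ (respectively $\sigma(g^{-1}p)$) are defined, since $k$ lives on all of $G$.

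What the paper's cocycle route buys in exchange is the explicit local transformation $f(p)\mapsto\rho(c(g,p))f(p)$. This is exactly what is used immediately afterwards to define the action on $M\times V$, and again in \Cref{lem:local_equiv}. So keeping the section-based computation alongside the global one, as you plan, is sensible.

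Your additional checks fill in points the paper leaves implicit: that $L_g$ is well defined on equivalence classes, that $k(g^{-1}\,\cdot\,)$ is again a Mackey function, and the homomorphism property.
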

\begin{proof}
    From the action on $G \times_{\rho} V$, the definition \eqref{eq:cocycle}, and the equivalence relations of the associated bundle, we get
    \begin{equation}
        L_gs(p) = [L_g\sigma(p),k(\sigma(p))] = [\sigma(gp),\rho(c(g,p))k(\sigma(p))].
    \end{equation}
    Using the fact that $k(\sigma(p))=k(L_{g^{-1}}L_g\sigma(p))=k(L_{g^{-1}}\sigma(gp)c(g,p))$ and the property of Mackey functions, we see that
    \begin{equation}
        L_gs(p) = [\sigma(gp),k(L_{g^{-1}}\sigma(gp))].
    \end{equation}
    Here, $k(L_{g^{-1}}\sigma(gp))$ is exactly the induced representation $\mathrm{Ind}^G_H\rho$ of $g$ on $k(g)$ defined below~\cref{dfn:mackey_fnc}. Thus,
    \begin{equation}
        L_gs(p) = [\sigma(gp),\mathrm{Ind}^G_H\rho(g)k(\sigma(gp))].
    \end{equation}
\end{proof}

Finally, we want to define the left action of $G$ on $M\times V$ in a way that is consistent with the left action on $G\times_\rho V$. Through the canonical local trivialisation $\varphi$, this can be obtained as 
\begin{equation}\label{eq:left_action_on_M_times_V}
    L_g(p, v)
:= \varphi^{-1}\big([L_g \sigma(p), v]\big)
= (gp, \rho(c(g, p)) v)
\end{equation}
for each $(p,v)\in M\times V$.

We have now provided a rigorous geometric definition of a feature field, both from the perspective of a section of an associated bundle and from the equivalent local perspective in terms of Mackey functions. In addition, we have derived how each feature field representative transforms under the left group action of $G$. The results are summarised in~\Cref{tab:feature_field_transformation}.

\begin{table}[!t]
    \centering
    \begin{tabular}{|c|c|}
    \hline
    \thead{\textbf{Feature field representative}} & \thead{\textbf{Transformation under the left action of $g\in G$}}  \\
    \hline
    \makecell{Section \\ $s:M\to G\times_\rho V$} & $[\sigma(p),f(p)]\mapsto[L_g\sigma(p),f(p)]$  \\ 
    \hline
    \makecell{Local feature field \\$f:M\to V$}& $f(p)\mapsto\rho(c(g,p))f(p)$  \\ 
    \hline
    \makecell{Mackey function \\$k:G\to V$}& $k(\sigma(p))\mapsto\mathrm{Ind}^G_H\rho(g)k(\sigma(gp))$  \\ 
    \hline
    \makecell{Local section \\$\varphi^{-1}\circ s:M\to M\times V$} & $(p,f(p))\mapsto(gp,\rho(c(g,p))f(p))$\\
    \hline
    \end{tabular}
    \caption{Summary of the four equivalent descriptions of a feature field introduced in~\cref{sec:feature_fields_and_induced_representations}, together with their transformation behaviours under the left action of $G$.}
    \label{tab:feature_field_transformation}
\end{table}

\subsection{Steerable NODEs through parallel transport}\label{sec:Parallel_transport_in_an_NODE}
We now have the necessary constructions to accomplish the goal set forth in~\cref{sec:motivation_and_general_setup}: to define a NODE model in which feature fields on $M=G/H$ are transported along a flow in a way that respects the geometry and symmetries of the underlying space. We refer to this novel class of models as \textit{steerable} NODEs, since the feature vectors are not only translated using the global action of $G$, but simultaneously \textit{steered} using the local action of $H$. As we will see, this local action is determined by the transport flow and bundle geometry alone, but is independent of the value of the feature field. This motivates the nomenclature in analogy with the original notion of steerability in CNNs~\cite{Cohen2017}.

The starting point is the manifold NODE in~\cref{dfn:mfd_NODE} determined by the vector field $\phi$ or, equivalently, the flow $\Phi$ generated by $\phi$. A principal $H$-connection $\omega$ on $G \to G/H$ provides a consistent way to transport points in the principal bundle along this flow using parallel transport. Feature fields are sections of the associated bundle $G \times_{\rho} V$, which inherits the notion of parallel transport from $G \to G/H$. The parallel transport in the associated bundle will define our steerable NODEs.

Concretely, to describe parallel transport along the integral curves $\Phi_p$ of $\phi$ in~\eqref{eqn:mfd_NODE}, we use the ability to express a section of $G \times_{\rho} V$ over $\Phi_p$ using the horizontal lift $\tilde{\Phi}_p$ to the principal $H$-bundle $G$, defined by
\begin{equation}
    \pi_*\left(\frac{d}{dt}\tilde{\Phi}_p(t)\right) = \frac{d}{dt}\Phi_p(t)\,, \quad \omega\left(\frac{d}{dt}\tilde{\Phi}_p(t)\right) = 0\,.
\end{equation}
In terms of a section $\sigma$ of the principal bundle $G \to G/H$, we have $\tilde{\Phi}_p(t) = \sigma(\Phi_p(t))h_p(t)$, where $h_p(t)$ is determined by~\eqref{eqn:horizontal_lift_ode}, and a section over $\Phi_p$ is
\begin{equation}\label{eq:section_associated_bundle}
    s(\Phi_p(t)) = \left[\sigma(\Phi_p(t)), f(\Phi_p(t))\right] = \left[\tilde{\Phi}_p(t), \rho(h_p^{-1}(t))f(\Phi_p(t))\right]\,.
    \end{equation}
According to~\cref{sec:connections_parallel_transport}, the section undergoes parallel transport along $\Phi_p$ if 
\begin{equation}\label{eqn:parallel_transport_section}
   \nabla_\phi s(\Phi_p(t)) =\left[\tilde{\Phi}_p(t), \frac{d}{dt}\big(\rho(h_p^{-1}(t))f(\Phi_p(t))\big)\right]=\left[\tilde{\Phi}_p(t), 0\right]\,,
\end{equation}
or, equivalently, if $\rho(h_p^{-1}(t))f(\Phi_p(t))$ is constant.

Just as parallel transport in the principal bundle $G$ can be described as the flow $\tilde{\Phi}$ obtained through the horizontal lift as described in~\cref{sec:connections_parallel_transport}, so too can the parallel transport in the associated bundle be described by a horizontal flow.

\begin{definition}[Parallel transport flow]\label{dfn:flow_ass_bundle}
    Let $M=G/H$ be a homogeneous space, $\phi$ a vector field on $M$, $\omega$ a principal $H$-connection on $G\to G/H$, and $G\times_{\rho} V$ the associated bundle defined by the representation $\rho: H \to \mathrm{GL}(V)$. Let $\Phi$ be the flow in $M$ generated by $\phi$, and $\tilde{\Phi}$ be the unique horizontal lift of $\Phi$ with respect to $\omega$. We define the map $\Gamma_{\Phi}:\R \times (G \times_{\rho} V) \to G \times_{\rho} V$ by
    \begin{equation}
        \Gamma_{\Phi}(t,[g,v]) = [\tilde{\Phi}(t,g),v]
    \end{equation}
    for every $g \in G$ and $v \in V$.
\end{definition}

\begin{lemma}\label{lem:flow_ass_bundle}
    The map $\Gamma_{\Phi}$ in~\cref{dfn:flow_ass_bundle} is a horizontal flow in $G$ with respect to the connection induced from $\omega$, and describes the parallel transport in $G \times_{\rho} V$ along the flow $\Phi$ in $M$.    
\end{lemma}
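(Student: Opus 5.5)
We need to check four things in turn: that $\Gamma_\Phi$ is well defined, that it is a flow, that it is horizontal for the connection induced on $G\times_\rho V$, and that it realises parallel transport in the sense of \cref{dfn:parallel_transport_associated}.

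\emph{Well-definedness.} $\Gamma_\Phi$ is defined on equivalence classes, so it must not depend on the representative. Take another representative $(gh,\rho(h^{-1})v)$ of $[g,v]$. By the $H$-equivariance of the lifted flow recalled in \cref{sec:connections_parallel_transport}, $\tilde{\Phi}(t,gh)=\tilde{\Phi}(t,g)h$. Hence
\begin{equation*}
[\tilde{\Phi}(t,gh),\rho(h^{-1})v]=[\tilde{\Phi}(t,g)h,\rho(h^{-1})v]=[\tilde{\Phi}(t,g),v].
\end{equation*}
This equivariance is the one genuinely needed input. It holds because the horizontal lift $\tilde\phi$ is unique and $R_h$-invariant, since $(R_h)_*H_uE=H_{uh}E$.

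\emph{Flow property.} This is inherited directly from $\tilde\Phi$. We have $\Gamma_\Phi(0,[g,v])=[g,v]$, and
\begin{equation*}
\Gamma_\Phi(t,\Gamma_\Phi(s,[g,v]))=[\tilde\Phi(t,\tilde\Phi(s,g)),v]=[\tilde\Phi(t+s,g),v].
\end{equation*}
It also covers $\Phi$, because $\pi_\rho\Gamma_\Phi(t,[g,v])=\pi(\tilde\Phi(t,g))=\Phi(t,\pi(g))$, using $\pi\circ\tilde\Phi=\Phi\circ\pi$.

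\emph{Horizontality.} The horizontal subspace of $G\times_\rho V$ at $[g,v]$ is the image of $H_gG\times\{0\}$ under the quotient map $q:G\times V\to G\times_\rho V$. This definition is consistent across representatives by the $R_h$-equivariance of $H_gG$. The generator of $\Gamma_\Phi$ at $[g,v]$ is $q_*(\tilde\phi_g,0)$, which lies in this subspace because $\omega(\tilde\phi)=0$. So $\Gamma_\Phi$ is a horizontal flow.

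\emph{Parallel transport.} Fix $p$ and a starting point $[g,v]$ with $\pi(g)=p$. Set $s(\Phi_p(t)):=\Gamma_\Phi(t,[g,v])=[\tilde\Phi_p(t),v]$, where $\tilde\Phi_p(t)=\tilde\Phi(t,g)$ is a horizontal lift of $\Phi_p$. In the notation of \cref{dfn:parallel_transport_associated}, the fibre component is $\eta\equiv v$, which is constant. Therefore $\nabla_\phi s=[\tilde\Phi_p(t),0]$, which is exactly the condition~\eqref{eqn:parallel_transport_section}.

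Conversely, by \eqref{eq:section_associated_bundle}, parallel transport means $\rho(h_p^{-1}(t))f(\Phi_p(t))$ is constant. The section is then $[\sigma(\Phi_p(t))h_p(t),\rho(h_p(0)^{-1})f(p)]=\Gamma_\Phi(t,s(p))$. By uniqueness of solutions to the ODE~\eqref{eqn:horizontal_lift_ode}, the parallel transport of $s(p)$ is exactly $\Gamma_\Phi(t,s(p))$.

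The main obstacle is making precise ``the connection induced from $\omega$'' on the associated bundle, that is, defining its horizontal distribution and checking that this is independent of the representative. Everything else reduces to the $H$-equivariance of $\tilde\Phi$ and a one-line computation.
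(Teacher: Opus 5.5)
Your proposal is correct and follows essentially the same route as the paper: well-definedness from $\tilde{\Phi}(t,gh)=\tilde{\Phi}(t,g)h$, the flow property inherited from $\tilde{\Phi}$, and parallel transport read off from the constant fibre component in $[\tilde{\Phi}_p(t),v]$. Your treatment of horizontality is sharper than the paper's. The paper essentially takes the parallel transport itself as the definition of the induced connection, whereas you define the horizontal distribution as $q_*(H_gG\times\{0\})$ and check that it does not depend on the representative; you also add the converse via uniqueness of horizontal lifts.
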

\begin{proof}
    We begin by showing that $\Gamma_\Phi$ is well-defined. We know that $(t,[g,v])=(t,[gh,\rho(h^{-1})v])$ for all $h\in H$. From the defining properties of $\omega$, we also know that $\tilde{\Phi}(t,gh)=\tilde{\Phi}(t,g)h$ for all $h\in H$. Thus, we have
    \begin{equation}
        \Gamma_\Phi(t,[gh,\rho(h^{-1})v]) = [\tilde{\Phi}(t,gh),\rho(h^{-1})v] = [\tilde{\Phi}(t,g),v] = \Gamma_\Phi(t,[g,v])\,,
    \end{equation}
    and $\Gamma_\Phi$ is well-defined.

    Since $\tilde{\Phi}$ is a flow, it immediately follows that $\Gamma_\Phi(0,[g,v])=[g,v]$ and that $\Gamma_\Phi(s,\Gamma_\Phi(t,[g,v]))=\Gamma_\Phi(s+t,[g,v])$ for all $g\in G$, $v\in V$ and $s,t\in\mathbb{R}$, making $\Gamma_\Phi$ a flow on the associated bundle.

    Given $g\in G$, let $p=\pi(g)$ and set $\Phi_p(t)=\Phi(t,p)$. Because $\tilde{\Phi}$ is the horizontal lift of $\Phi$, the horizontal lift of $\Phi_p(t)$ through $g$ is $\tilde{\Phi}_p(t)=\tilde{\Phi}(t,g)$, and
    \begin{equation}
        \Gamma_\Phi(t,[g,v])=[\tilde{\Phi}_p(t),v].
    \end{equation}
    Comparing with~\cref{eq:section_associated_bundle}, this is exactly the form of a parallel transported section where the element $v$ corresponds to the constant quantity $\rho(h_p^{-1}(t))f(\Phi_p(t))$. It follows that $\Gamma_\Phi$ is a flow that transports the element $[g,v]$ along $\Phi_p$ by parallel transport. 

    Moreover, the map $t\mapsto [\tilde{\Phi}_p(t),v]$ defines a curve in $G\times_\rho V$ whose $G$-component is horizontal with respect to $\omega$ and whose $V$-component remains constant. Taking this parallel transport to define an induced connection on the associated bundle, the curve is manifestly horizontal. In this sense, $\Gamma_\Phi$ is a horizontal flow.
\end{proof} 

The flow $\Gamma_\Phi$ provides a geometrically principled way of transporting feature fields (i.e., sections of $G \times_{\rho} V$) along the flow corresponding to the manifold NODE~\eqref{eqn:mfd_NODE}. If $\Phi_p$ is a solution curve of a NODE defined by the flow $\Phi$ generated by $\phi$, and $h_p(t)\in H$ is the element in $H$ with initial condition $h_p(0)=e$ that defines the horizontal lift $\tilde{\Phi}_p(t)=\sigma(\Phi_p(t))h_p(t)$, then
\begin{equation}
    \Gamma_\Phi(1,[\sigma(p),v])=[\tilde{\Phi}_p(1),v]=[\sigma(\Phi_p(1)),\rho(h_p(1))v]
\end{equation}
is the result of parallel transport of the section $s(p)=[\sigma(p),v]$ along the trajectory $\Phi_p$ from $p$ to $\Phi_p(1)$. Furthermore, since $\Gamma_\Phi$ is a flow and the local trivialisation $\varphi$ is a (local) diffeomorphism, the map
\begin{equation}
    (p,v)\overset{\varphi}{\longmapsto}[\sigma(p),v]\mapsto\Gamma_\Phi(1,[\sigma(p),v])\overset{\varphi^{-1}}{\longmapsto}(\Phi_p(1),\rho(h_p(1))v)
\end{equation}
is a diffeomorphism that maps the initial data $(p,v)\in M\times V$ to the output $(\Phi_p(1),\rho(h_p(1))v)$ by transporting the base point $p$ along the flow $\Phi$ and simultaneously steering the feature vector $v$ through the action of $\rho(h_p(1))$. This diffeomorphism defines the steerable NODE.

\begin{definition}[Steerable NODEs on homogeneous spaces]\label{dfn:steerable_NODE}
    Let $M=G/H$ be a homogeneous space, $\rho: H \to \mathrm{GL}(V)$ be a representation of $H$, and let $\sigma$ be a local section of the principal $H$-bundle $G$. Furthermore, let $\phi:M\to TM$ be a learnable vector field on $M$ and $\omega \in  \Omega^1(G,\mathfrak{h})$ a learnable principal $H$-connection on $G$. Then for every point $p\in M$, there exist unique curves $\Phi_p:\mathbb{R}\to M$ and $h_p:\mathbb{R}\to H$ that solve the parallel transport ODEs 
    \begin{empheq}[left=\empheqlbrace]{align}
        &\frac{d\Phi_p(t)}{dt} = \phi_{(\Phi_p(t))},\label{eqn:steerable_NODE_base} \\
        &\frac{dh_p}{dt} = - (R_{h_p(t)})_* \omega\left(\sigma_*\phi_{(\Phi_p(t))}\right),\label{eqn:steerable_NODE_connection}
    \end{empheq}
    with initial conditions $\Phi_p(0)=p$ and $h_p(0)=e$. The \textit{steerable NODE} on $M \times V$ is the diffeomorphism $\Psi:M\times V\to M\times V$ defined by $\Psi(p,v)=(\Phi_p(1), \rho(h_p(1))v)$. The point $(p,v)\in M\times V$ is the \textit{input} and $\Psi(p,v)\in M\times V$ is the \textit{output} of the model.
\end{definition}

\begin{remark}
As noted in~\cref{sec:connections_parallel_transport}, the right-hand side of~\eqref{eqn:steerable_NODE_connection} should be interpreted as the induced right action $(R_{h_p(t)})_*$ on the Lie algebra-valued quantity obtained by evaluating the connection on the vector $\sigma_*\phi_{\Phi_p(t)}$ tangent to $G$. The different maps featured in the construction are illustrated in the following diagram:
\[
\begin{tikzcd}[column sep=4.5em, row sep=3em]
    T_{\sigma(\Phi_p(t))}G 
        \arrow[r, "\omega"] 
    & \mathfrak{h}
        \arrow[out=10, in=170, loop right, "\,(R_{h_p(t)})_*\,"] \\
    T_{\Phi_p(t)}M 
        \arrow[u, "\sigma_*"']
\end{tikzcd}
\]
\end{remark}

The pointwise formulation of steerable NODEs in~\cref{dfn:steerable_NODE} shows that we can indeed interpret the model as a coupled system of (neural) ODEs, where the gradients are determined by the vector field $\phi$ and the connection $\omega$, and where we recover the manifold NODE when $H$ is trivial. We emphasise that both $\phi$ and $\omega$ are learnable and can be parametrised, e.g., by neural networks during implementation. Given an input $(p,v)$, measured with respect to the section $\sigma$ of the principal bundle $G$, we first integrate~\eqref{eqn:steerable_NODE_base} to obtain the curve $\Phi_p(t)$ on $M=G/H$ along which to transport the feature vector. Note that this is a manifold NODE on $M$ independent of the connection. Subsequently, we integrate~\eqref{eqn:steerable_NODE_connection}, which depends on both the vector field $\phi$ and the connection $\omega$, to obtain the steering function $h_p(t)$. The model output $\Psi(p,v) = (\Phi_p(1),\rho(h_p(1)v)$ can then be compared to a target, and the parameters defining $\phi$ and $\omega$ updated to minimise the discrepancy. In particular, it is clear that the available frameworks for training NODEs (e.g., the adjoint sensitivity method~\cite{Chen2018} or the flow matching framework~\cite{Lipman2023}) are then directly applicable to steerable NODEs.

The feature fields are sections of $G \times_{\rho} V$, and are therefore inherently local since, in general, this bundle admits no global sections. This is also reflected in the steerable NODE and its dependence on the local section $\sigma$ of $G$. A global extension to all of $M$ requires using an atlas of charts covering $M$ and the transition functions defining the bundles $G$ and $G \times_{\rho} V$. This is standard practice in differential geometry (see, e.g.,~\cite{Nakahara2003,Lee2012}) and we will not go into detail here. 

The steerable NODE can be interpreted geometrically as a model that learns a flow in the associated bundle $G \times_{\rho} V$ that transforms the feature vector $f(p) \in V$ at $p \in M$ by transporting it along a flow in $M$ in a way that is manifestly compatible with the global symmetry $G$ of $M$. The flow in $M$ is determined by the choice of vector field, and the transport can be accomplished in different ways corresponding to the choice of the connection $\omega$. Consequently, training the steerable NODE amounts to learning both where to transport $f(p)$ and in which of the possible ways to perform the transport.

\Cref{fig:Steerable_NODE_illustration} provides a visual illustration of a steerable NODE, showing how the coupled system of (neural) ODEs interact to transport points $p$ on $M=G/H$ along the solution curve $\Phi_p$, while simultaneously steering the associated features $f(p)$ through the induced action of the solution curve $h_p$ on $H$.

\begin{figure}[!t]
\centering
\includegraphics[scale=0.5]{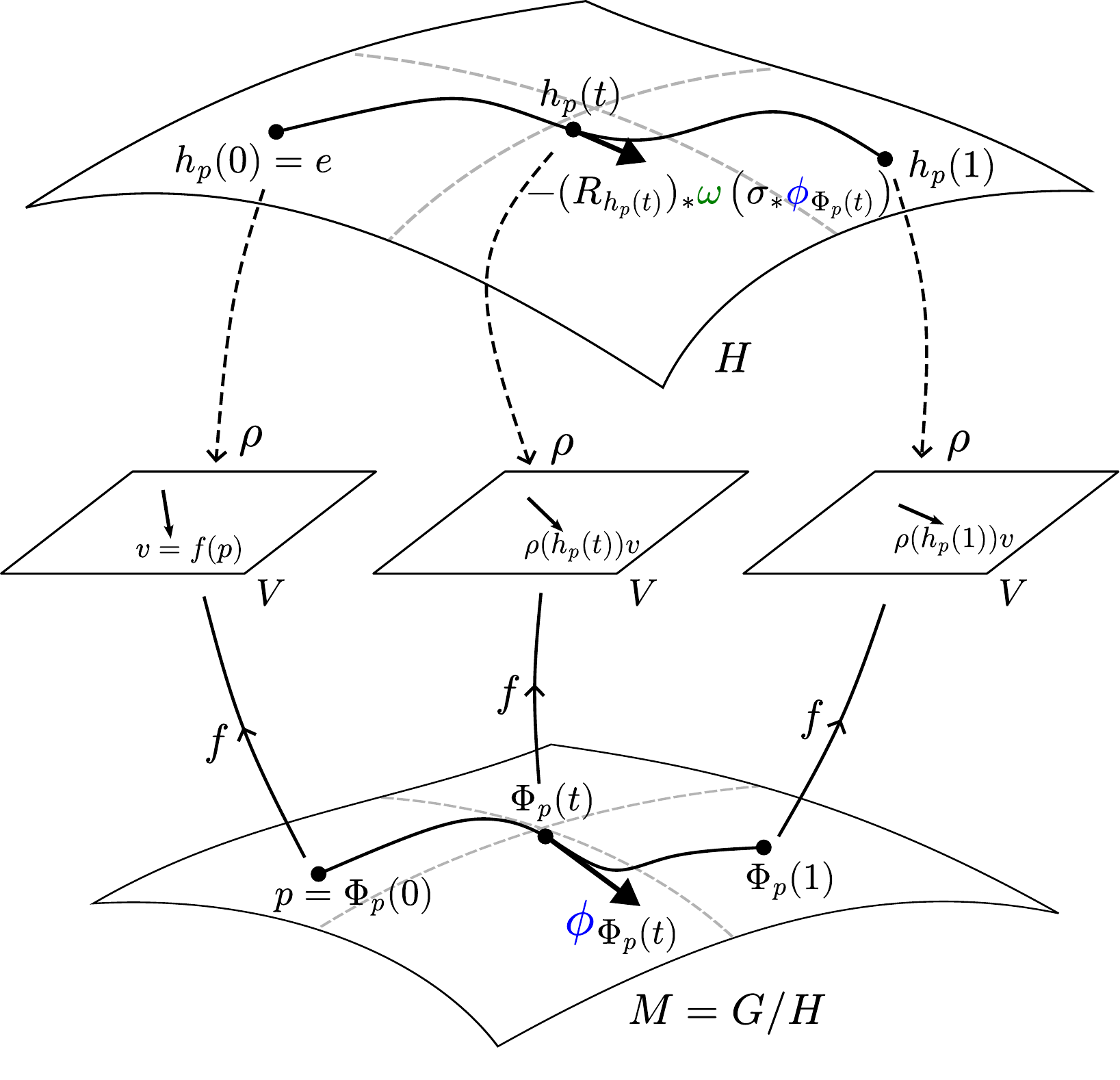}
\caption{A steerable NODE consists of a coupled system of ODEs evolving on different manifolds. The integral curve $\Phi_p$ (bottom) is defined on the homogeneous space $M=G/H$ by a learnable vector field $\phi$ (blue). The curve $h_p$ (top) is defined on the closed Lie subgroup $H$ and is determined by a learnable connection $\omega$ (green). Feature vectors $v\in V$, viewed as elements in the fibre of the associated bundle $G\times_\rho V$, are transported along the curve $\Phi_p$ and simultaneously \textit{steered} by the representation $\rho(h_p(t))$ for each $t\in[0,1]$. The resulting map $(p,v)\mapsto(\Phi_p(1),\rho(h_p(1))v)$ defines the steerable NODE. Notably, the curve $\Phi_p$ depends only on $\phi$, whereas the curve $h_p$, which determines the transformation of the feature component, depends on both $\phi$ and the connection $\omega$. Geometrically, the steering by $\rho(h_p(t))$ describes a parallel transport along the curve $\Phi_p$ in the associated bundle $G\times_\rho V$.}
\label{fig:Steerable_NODE_illustration}
\end{figure}

We end this section by providing a concrete example of a steerable NODE on a topologically trivial homogeneous space with fixed vector field $\phi$ and connection $\omega$.
\begin{example}\label{ex:steerabl_NODE}
    Let $G = \R^2 \times U(1)$ be the trivial $H=U(1)$ bundle over $M=G/H=\R^2$ with the global $G$-action given by translations. A point in $G$ is $(x,y,\theta)$ with $(x,y)\in \R^2$ and $\theta\in \R/2\pi\Z$, and the action of $g=(\alpha,\beta,\varphi) \in G$ is
    \begin{equation}\label{eqn:counterex_left_action_G}
        L_g(x,y,\theta) = (x+\alpha,y+\beta,\theta+\varphi\,(\mathrm{mod}\, 2\pi))\,.
    \end{equation}
    The induced action on $M=\R^2$ is
    \begin{equation}\label{eqn:counterex_left_action_M}
        L_g(x,y) = (x+\alpha,y+\beta)\,,
    \end{equation}
    with stabiliser $H = \{h = (0,0,\varphi) | \varphi \in \R/2\pi\Z\}$ and right action $R_h(x,y,\theta) = (x,y,\theta+\varphi\,(\mathrm{mod}\, 2\pi))$. Note that the adjoint action $\mathrm{Ad}_{h}$ is trivial since $H$ is abelian, and that $(R_h)_* = \id$.

    We consider the vector field $\phi= \partial_x$, with integral curve $\Phi_p(t) = p + (t,0)$ through the point $p \in \R^2$. To define a steerable NODE, we also need a connection on $G$. We take $\omega = \pi dx + dy + d\theta$, which is clearly a 1-form on $G$ taking values in $\mathfrak{h} = \R$ and indeed a principal connection, since:
    \begin{enumerate}
        \item[(i)] It acts trivially on the vertical subspace,
        \begin{equation}
            \omega(\partial_{\theta}) = \pi dx(\partial_{\theta}) + dy(\partial_{\theta}) + d\theta(\partial_{\theta}) = 1 \,.
        \end{equation}
        \item[(ii)] It is $H$-equivariant,
        \begin{equation}
            (R_h)^*\omega = \pi(R_h)^*dx + (R_h)^*dy + (R_h)^*d\theta = \omega =  \mathrm{Ad}_{h^{-1}}\omega\,. 
        \end{equation}
    \end{enumerate}

    We choose a section of $G$ to be $\sigma(x,y)=(x,y,\chi)$, for some constant $\chi \in \R/2\pi\Z$, which yields $\sigma_* \phi = \partial_x$. Inserting this into~\eqref{eqn:steerable_NODE_connection}, we obtain
    \begin{equation}
        \frac{dh_p}{dt} = -\omega(\sigma_* \phi) = -\omega(\partial_x) = -\pi \,,
    \end{equation}
    where we have again used the triviality of the adjoint $H$-action. Solving this equation yields $h_p(t) = -\pi t$, where we have enforced the initial condition $h_p(0)=0$. The horizontal lift is given by
    \begin{equation}
        \tilde{\Phi}_p(t) = \sigma(\Phi_p(t))h(t) = (x+t,y,\chi-\pi t\,(\mathrm{mod}\, 2\pi)) \,,
    \end{equation}
    and the corresponding lift of the vector field is $\tilde{\phi} = \partial_x -\pi \partial_{\theta}$. Indeed, this is horizontal, $\omega(\tilde{\phi}) = \pi - \pi = 0$, and projects to $\phi$, $\pi_* \tilde{\phi} = \partial_x$.

    For the associated vector bundle, we choose a non-trivial representation of $H=U(1)$ on $V=\R^2$ defined by rotations
    \begin{equation}
        \rho(\varphi) = R(\varphi) = \left( \begin{array}{cc}
             \cos(\varphi) & -\sin(\varphi) \\
             \sin(\varphi) & \cos(\varphi) 
        \end{array}\right)\,.
    \end{equation}
    Fixing a pair $(p,v)$ in $M \times V = \R^2 \times \R^2$, the steerable NODE defined by the vector field $\phi$ and the connection $\omega$ is then given by~\cref{dfn:steerable_NODE} as
    \begin{equation}
        \Psi(p,v) = (\Phi_p(1),\rho(h_p(1))v) = (x+1,y,R(\pi)v)\,,
    \end{equation}
    where we write $p = (x,y)$ and have used that $h_p(1) = -\pi = \pi \in \R /2\pi \Z$.
\end{example}

\section{Equivariance of steerable NODEs}\label{sec:induced-equivariance}
In the previous section, we constructed steerable NODEs on the homogeneous space $M=G/H$, where the feature fields transform in a representation $\rho$ of the stabilising subgroup $H$. This guarantees equivariance with respect to the local $H$-action. Beyond this, the global symmetry group $G$ also has a natural action on the feature maps through the induced representation $\mathrm{Ind}_H^G(\rho)$. This raises the question whether steerable NODEs can be made equivariant with respect to the global action of $G$, and what additional constraints such equivariance would impose on the model.

To address these questions, we first need a clear definition of equivariance of steerable NODEs. For ordinary manifold NODEs on $M=G/H$ in~\cref{dfn:mfd_NODE}, which correspond to the case of trivial $H$ in the steerable framework, equivariance is defined as follows.

\begin{definition}\label{dfn:equivariant_mfd_NODE}
    A manifold neural ODE $\psi:M\to M$ is \textit{$G$-equivariant} if $\psi(gp)=L_g\psi(p)$ for all $p\in M$ and $g\in G$.
\end{definition}

It is well known that equivariance of the diffeomorphism $\psi$, equivariance of the corresponding flow, $L_g \Phi(t,p) = \Phi(t,gp)$, and invariance of the generating vector field, $(L_g)_*\phi_p = \phi_{gp}$, are equivalent~\cite{Kohler2020,Katsman2021,Andersdotter2025}.

A notion of equivariance of the steerable NODEs in~\cref{dfn:steerable_NODE} can be obtained in an analogous way, using the left action of $G$ on $M \times V$ introduced in~\cref{sec:feature_fields_and_induced_representations}.

\begin{definition}\label{dfn:equivariant_steerable_NODE}
    A steerable NODE $\Psi:M \times V \to M \times V$ on the homogeneous space $M=G/H$ is \textit{$G$-equivariant} if $L_g\Psi(p,v)=\Psi(L_g(p,v))$ for all $p\in M$, $v \in V$ and $g\in G$.
\end{definition}

\begin{remark}
    The definition of equivariance of the steerable NODE is inherently local due to the local section $\sigma$ defining the coordinates in which the vector $v$ is measured. Extending this globally or changing the gauge $\sigma$ is straightforward using the transition functions defining $G$ and $M=G/H$, in the same way as was discussed in connection with~\cref{dfn:steerable_NODE}. 
\end{remark}

\subsection{Equivariance via horizontal flows}
In this section, we provide sufficient conditions for equivariance of steerable NODEs under the global symmetry group $G$. More specifically, in~\cref{thm:equivariance_steerable_NODE_flow} we show how a steerable NODE inherits the equivariance of the NODE on the base manifold $M=G/H$ if the connection defining parallel transport is compatible with the left action of $G$.

Equivariance of the steerable NODE is conveniently described in terms of the flows associated with the manifold NODE on $M=G/H$. As discussed in~\Cref{sec:connections_parallel_transport}, given a connection $\omega$, the flow $\Phi:\R \times M \to M$ corresponding to the NODE $\psi:M \to M$ lifts uniquely to a horizontal flow $\tilde{\Phi}: \R \times G \to G$ on $G$, which is equivariant with respect to the right action of $H$. In the following \namecref{lem:equiv_lift}, we show that if the flow $\Phi$ is $G$-equivariant, so is the horizontal lift, provided that the connection respects the group structure of $G$.  

\begin{lemma}\label{lem:equiv_lift}
    Let $\Phi$ be a flow on the homogeneous space $M=G/H$ generated by a vector field $\phi: M \to TM$, and let $\omega$ be a fixed principal $H$-connection on $G$. If the vector field $\phi$ and the connection $\omega$ are both $G$-invariant, then the horizontal lift $\tilde{\Phi}$ of $\Phi$ is $G$-equivariant and the corresponding horizontal lift $\tilde{\phi}$ of $\phi$ is $G$-invariant.
\end{lemma}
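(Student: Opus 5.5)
We prove the invariance of the lifted vector field first and then obtain equivariance of the flow by integration. Here $G$-invariance of $\phi$ means $(L_g)_*\phi_p=\phi_{gp}$, and $G$-invariance of $\omega$ means $L_g^*\omega=\omega$ for all $g\in G$.

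Fix $g\in G$ and consider the pushed-forward vector field $\tilde{\phi}' := (L_g)_*\tilde{\phi}\circ L_{g^{-1}}$ on $G$, i.e.\ $\tilde{\phi}'_{u}=(L_g)_*\tilde{\phi}_{g^{-1}u}$. Since the horizontal lift of $\phi$ is unique (as recalled in \cref{sec:connections_parallel_transport}, following~\cite{Bleecker1981}), it suffices to check that $\tilde{\phi}'$ satisfies the three defining properties: horizontality, $H$-invariance, and being a lift of $\phi$. Uniqueness then gives $\tilde{\phi}'=\tilde{\phi}$, which is exactly $G$-invariance of $\tilde{\phi}$.

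The three checks proceed as follows.
\begin{itemize}
\item[(a)] \emph{Lift property.} Since $\pi\circ L_g=L_g\circ\pi$, we have $\pi_*\tilde{\phi}'_u=(L_g)_*\pi_*\tilde{\phi}_{g^{-1}u}=(L_g)_*\phi_{g^{-1}\pi(u)}$. By invariance of $\phi$, this equals $\phi_{\pi(u)}$.
\item[(b)] \emph{Horizontality.} We compute $\omega_u(\tilde{\phi}'_u)=\omega_u((L_g)_*\tilde{\phi}_{g^{-1}u})=(L_g^*\omega)_{g^{-1}u}(\tilde{\phi}_{g^{-1}u})=\omega(\tilde{\phi}_{g^{-1}u})=0$, using $L_g^*\omega=\omega$.
\item[(c)] \emph{$H$-invariance.} Left and right multiplication commute, $L_g\circ R_h=R_h\circ L_g$. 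Hence $(R_h)_*\tilde{\phi}'$ is the $L_g$-pushforward of $(R_h)_*\tilde{\phi}=\tilde{\phi}$, which is $\tilde{\phi}'$.
\end{itemize}
I expect the main subtlety to lie in step (b) together with the correct use of uniqueness. One has to make sure that invariance of $\omega$ is the condition $L_g^*\omega=\omega$, with no adjoint twist, since $L_g$ does not act on $\mathfrak{h}$. One also has to confirm that the uniqueness statement applies to globally defined smooth vector fields. If uniqueness is in doubt, I would argue pointwise instead: $H_uG$ is a complement to $V_uG=\ker\pi_{*}$, so $\pi_*$ restricted to $H_uG$ is an isomorphism onto $T_{\pi(u)}M$. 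Any two horizontal vectors at $u$ with the same projection therefore coincide, and (a) and (b) alone then force $\tilde{\phi}'_u=\tilde{\phi}_u$.

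Finally, for equivariance of the flow, take the curve $c(t):=L_g\tilde{\Phi}(t,u)$. It satisfies $c(0)=gu$ and $\dot c(t)=(L_g)_*\tilde{\phi}_{\tilde{\Phi}(t,u)}=\tilde{\phi}_{c(t)}$ by the invariance just shown. So $c$ is an integral curve of $\tilde{\phi}$ through $gu$. By uniqueness of solutions of ODEs, $c(t)=\tilde{\Phi}(t,gu)$, that is, $L_g\tilde{\Phi}(t,u)=\tilde{\Phi}(t,L_gu)$, which is the claimed $G$-equivariance of $\tilde{\Phi}$.
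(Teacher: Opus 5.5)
Your proposal is correct and follows essentially the same route as the paper: you show that $(L_g)_*\tilde{\phi}$ is a horizontal, $H$-equivariant lift of $\phi$ and conclude by uniqueness. The three checks match the paper's: the lift property via $\pi\circ L_g=L_g\circ\pi$, horizontality via $L_g^*\omega=\omega$, and $H$-equivariance via commuting left and right multiplication. The only differences are minor: you derive equivariance of $\tilde{\Phi}$ directly from uniqueness of integral curves, where the paper cites this standard fact from its earlier work, and you correctly note that pointwise uniqueness of horizontal lifts makes the $H$-equivariance check redundant.
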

\begin{proof}
    Since $\tilde{\Phi}$ is the flow generated by $\tilde{\phi}$, it suffices to prove that $(L_g)_*\tilde{\phi}_{g'}=\tilde{\phi}_{gg'}$ for all $g,g'\in G$ if $(L_g)_*\phi_p = \phi_{gp}$ and $(L_g)^*\omega_{g'}=\omega_{g^{-1}g'}$ for all $g,g'\in G$ and $p \in M$ (see, e.g.,~\cite[Theorem 3.4]{Andersdotter2025}). We do this by showing that $(L_g)_*\tilde{\phi}$ is a horizontal $H$-equivariant lift of $\phi$.

    First, we use the fact that the projection $\pi:G\to G/H$ commutes with the left action of $G$, which implies that $\pi_* \circ (L_g)_* = (L_g)_* \circ \pi_*$. For all $g,g'\in G$ we then have that
    \begin{equation}
        \pi_*\left((L_g)_*\tilde{\phi}_{g'}\right) = (L_g)_*\left(\pi_*\tilde{\phi}_{g'}\right) = (L_g)_*\phi_{\pi(g')} = \phi_{L_g\pi(g')} = \phi_{\pi(gg')}\,,
    \end{equation}
    where we have used the fact that $\tilde{\phi}$ is a lift of $\phi$, and that $\phi$ is $G$-invariant. Consequently, $(L_g)_*\tilde{\phi}$ is a lift of $\phi$. Furthermore, the lift $(L_g)_*\tilde{\phi}$ is horizontal by the invariance of $\omega$ since, for all $g,g' \in G$,
    \begin{equation}
        \omega_{gg'}\left((L_g)_*\tilde{\phi}_{g'}\right) = (L_g)^*\omega_{gg'}(\tilde{\phi}_{g'}) = \omega_{g'}(\tilde{\phi}_{g'}) = 0\,.
    \end{equation}
    Finally, equivariance of $(L_g)_*\tilde{\phi}$ under the right action of $H$ is immediate. For all $g,g' \in G$ and $h \in H$ we have
    \begin{equation}
        (R_h)_*\left((L_g)_*\tilde{\phi}_{g'}\right) = (L_g)_*\left((R_h)_*\tilde{\phi}_{g'}\right) = (L_g)_*\tilde{\phi}_{g'h}\,.
    \end{equation}
    
    Consequently, $L_g\tilde{\phi}$ is a horizontal, $H$-equivariant lift of $\phi$. Since $\tilde{\phi}$ is the unique vector field on $G$ with these properties, we must have $L_g\tilde{\phi} = \tilde{\phi}$ for all $g\in G$.
\end{proof}

The parallel transport in the associated bundle $G \times_{\rho} V$ is induced by the parallel transport defined by the horizontal lift $\tilde{\Phi}$. Equivariance under the global action of $G$ is therefore directly inherited by the parallel transport flow $\Gamma_{\Phi}$ defined in~\cref{dfn:flow_ass_bundle}.

\begin{lemma}\label{lem:equiv_lift_ass}
    Let $\Phi$ be the flow on the homogeneous space $M=G/H$ generated by a vector field $\phi: M \to TM$, and let $\omega$ be a principal $H$-connection on $G$. The horizontal lift $\Gamma_{\Phi}: \R \times (G \times_{\rho} V) \to G \times_{\rho} V$ of $\Phi$ in~\cref{dfn:flow_ass_bundle}, defining parallel transport in the associated bundle $G \times_{\rho} V$, is $G$-equivariant if the vector field $\phi$ and the connection $\omega$ are both $G$-invariant.
\end{lemma}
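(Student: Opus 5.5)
The plan is to reduce the statement to \cref{lem:equiv_lift} and then check that the associated-bundle construction in \cref{dfn:flow_ass_bundle} passes $G$-equivariance through. By \cref{lem:equiv_lift}, $G$-invariance of $\phi$ and $\omega$ gives $G$-equivariance of the horizontal lift:
\begin{equation}
\tilde{\Phi}(t,gg')=L_g\tilde{\Phi}(t,g')
\end{equation}
for all $g,g'\in G$ and $t\in\R$. The left action of $G$ on $G\times_\rho V$ is $L_g[g',v]=[gg',v]$, as fixed in \cref{sec:feature_fields_and_induced_representations}.

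We first confirm that this action is well defined on equivalence classes. If $(g',v)$ is replaced by $(g'h,\rho(h^{-1})v)$, then $[gg'h,\rho(h^{-1})v]=[gg',v]$, because left multiplication by $g$ commutes with right multiplication by $h\in H$. The well-definedness of $\Gamma_\Phi$ itself is already established in \cref{lem:flow_ass_bundle}, so we may compute on any representative.

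The main step is then a direct computation. For $g\in G$ and $[g',v]\in G\times_\rho V$,
\begin{equation}
\Gamma_\Phi(t,L_g[g',v])=\Gamma_\Phi(t,[gg',v])=[\tilde{\Phi}(t,gg'),v]=[L_g\tilde{\Phi}(t,g'),v]=L_g\Gamma_\Phi(t,[g',v]),
\end{equation}
and this holds for all $t$. Equivalently, $\Gamma_\Phi(t,\cdot)$ commutes with $L_g$.

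The only point that needs care is not analytic but definitional. The equality $\tilde{\Phi}(t,gg')=g\tilde{\Phi}(t,g')$ has to be the full statement from \cref{lem:equiv_lift}, valid for every starting point $g'$ and not only along the image of $\sigma$; this is true because $\tilde{\phi}$ is $G$-invariant on all of $G$. If desired, the equivariance can also be restated through the trivialisation $\varphi$. Using \eqref{eq:left_action_on_M_times_V} and the cocycle \eqref{eq:cocycle}, it then reads as equivariance on $M\times V$, which sets up the subsequent theorem on the steerable NODE $\Psi$.
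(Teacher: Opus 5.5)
Your proposal is correct and matches the paper's proof: you invoke \cref{lem:equiv_lift} to get $\tilde{\Phi}(t,gg')=L_g\tilde{\Phi}(t,g')$, then compute on representatives using $L_g[g',v]=[gg',v]$. The paper leaves implicit the well-definedness of the left action on $G\times_\rho V$, and your explicit check of it is a harmless, correct addition.
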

\begin{proof}
    By~\Cref{lem:equiv_lift}, the horizontal lift $\tilde{\Phi}$ of the flow $\Phi$ to the principal bundle $G$ is $G$-equivariant. The left action of $G$ on $G\times_{\rho}V$ is given by $L_g[g',v] = [gg',v]$, which implies
    \begin{eqnarray*}
        L_g \Gamma_{\Phi}(t,[g',v]) &=& L_g\left[\tilde{\Phi}(t,g'),v\right] = \left[L_g\tilde{\Phi}(t,g'),v\right] = \left[\tilde{\Phi}(t,gg'),v\right]\\
        &=& \Gamma_{\Phi}(t,[gg',v]) = \Gamma_{\Phi}(t,L_g[g',v])
    \end{eqnarray*}
    for every $g,g'\in G$ and $v \in V$. Consequently, the flow $\Gamma_{\Phi}$ is $G$-equivariant.
\end{proof}

We are now ready to prove the main theorem establishing $G$-equivariance of the steerable NODE under the conditions that the vector field generating the flow in $G/H$ and the $H$-principal connection defining parallel transport are both $G$-invariant.

\begin{theorem}\label{thm:equivariance_steerable_NODE_flow}
    Let $M=G/H$ be a homogeneous space, let $\phi: M \to TM$ be a vector field on $M$, and let $\omega$ be a principal $H$-connection on $G$. The steerable NODE defined by $\phi$ and $\omega$ is $G$-equivariant if the vector field $\phi$ and the connection $\omega$ are both $G$-invariant.
\end{theorem}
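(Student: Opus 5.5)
The plan is to reduce the statement to \Cref{lem:equiv_lift_ass} by writing the steerable NODE as the conjugate of the time-one parallel transport flow by the canonical local trivialisation: $\Psi = \varphi^{-1}\circ\Gamma_\Phi(1,\cdot)\circ\varphi$. This identity was already noted in the discussion preceding \cref{dfn:steerable_NODE}, via $\Gamma_\Phi(1,[\sigma(p),v])=[\sigma(\Phi_p(1)),\rho(h_p(1))v]$. The proof will then consist of showing that both $\varphi$ and $\varphi^{-1}$ intertwine the left action of $G$ on $M\times V$ defined in \eqref{eq:left_action_on_M_times_V} with the action $L_g[g',v]=[gg',v]$ on $G\times_\rho V$.

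First, I will check that $\varphi\circ L_g = L_g\circ\varphi$ on $M\times V$. Expanding the left-hand side gives $\varphi(gp,\rho(c(g,p))v)=[\sigma(gp),\rho(c(g,p))v]$. By the equivalence relation in the associated bundle this equals $[\sigma(gp)c(g,p),v]$, and by the cocycle identity \eqref{eq:cocycle} this is $[L_g\sigma(p),v]=L_g\varphi(p,v)$. This is essentially the statement that \eqref{eq:left_action_on_M_times_V} was defined as $\varphi^{-1}\circ L_g\circ\varphi$, so applying $\varphi^{-1}$ gives the companion identity $\varphi^{-1}\circ L_g = L_g\circ\varphi^{-1}$ on the image of $\varphi$.

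Second, I will chain the identities:
\[
\Psi(L_g(p,v)) = \varphi^{-1}\Gamma_\Phi(1,\varphi(L_g(p,v))) = \varphi^{-1}\Gamma_\Phi(1,L_g\varphi(p,v)) = \varphi^{-1}L_g\Gamma_\Phi(1,\varphi(p,v)) = L_g\Psi(p,v).
\]
The middle step uses \Cref{lem:equiv_lift_ass}, which applies because $\phi$ and $\omega$ are both $G$-invariant. Beyond that lemma, the only ingredients are well-definedness of $\Gamma_\Phi$ (\Cref{lem:flow_ass_bundle}) and the fact that $\tilde\Phi(t,\sigma(p))=\sigma(\Phi_p(t))h_p(t)$ with $h_p(0)=e$, which identifies the output of $\Gamma_\Phi$ with the solution of \eqref{eqn:steerable_NODE_base}--\eqref{eqn:steerable_NODE_connection}.

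I expect the main obstacle to be locality rather than algebra. Because $\sigma$ is only a local section, $\varphi^{-1}$ is defined only over the domain $U$ of $\sigma$. The chain above therefore requires that $p$, $gp$, $\Phi_p(1)$ and $g\Phi_p(1)=\Phi_{gp}(1)$ all lie in $U$; the last equality holds by equivariance of $\Phi$. I would state the result with the standing locality assumption of the remark after \cref{dfn:equivariant_steerable_NODE}, so that equivariance holds wherever both sides are defined. I would note that the cocycle $c(g,\Phi_p(1))$ appearing on the left differs from $c(g,p)$ on the right, and that this mismatch is exactly absorbed by the intertwining property of $\varphi$. As a sanity check, I would verify the $V$-components directly: this amounts to $\rho(c(g,\Phi_p(1))h_p(1))=\rho(h_{gp}(1)c(g,p))$, which follows from $L_g\tilde\Phi(1,\sigma(p))=\tilde\Phi(1,\sigma(gp)c(g,p))$ together with $H$-equivariance of $\tilde\Phi$.
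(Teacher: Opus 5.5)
Your proposal is correct and follows essentially the same route as the paper. The paper also writes $\Psi=\varphi^{-1}\circ\Gamma^1_{\Phi}\circ\varphi$, identifies the action on $M\times V$ with $\varphi^{-1}\circ L_g\circ\varphi$ via \eqref{eq:left_action_on_M_times_V}, and concludes by conjugating the $G$-equivariance of $\Gamma_\Phi$ from \Cref{lem:equiv_lift_ass}; your explicit cocycle check of the intertwining property and your handling of the domain of the local section $\sigma$ just spell out what the paper takes as a definition and leaves to the accompanying remark.
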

\begin{proof}
    The steerable NODE $\Psi: M \times V \to M \times V$ can be expressed in terms of the flow $\Gamma_{\Phi}$ and the canonical local trivialisation $\phi$ with respect to the local section $\sigma$ as
    \begin{equation}
        \Psi(p,v) = \varphi^{-1} \left( \Gamma_{\Phi} (1,\varphi(p,v)) \right)\,
    \end{equation}
    for any $p \in M$ and $v \in V$. Note that this indeed defines a diffeomorphism, since both $\varphi$ and $\Gamma_{\Phi}$ are diffeomorphisms. We also note that the left action of $G$ on $M \times V$ is given by~\eqref{eq:left_action_on_M_times_V} as,
    \begin{equation}
        L_g(p,v) = \varphi^{-1} \left( L_g ( \varphi(p,v)) \right)\,,
    \end{equation}
    for any $p \in M$ and $v \in V$. Consequently,~\Cref{lem:equiv_lift_ass} implies
    \begin{eqnarray*}
        L_g \circ \Psi &=& \left(\varphi^{-1} \circ L_g \circ \varphi \right) \circ \left( \varphi^{-1} \circ \Gamma^1_{\Phi} \circ \varphi \right) = \varphi^{-1} \circ L_g \circ \Gamma^1_{\Phi} \circ \varphi \\
        &=& \varphi^{-1} \circ \Gamma^1_{\Phi} \circ L_g \circ \varphi = \left(\varphi^{-1} \circ \Gamma^1_{\Phi} \circ \varphi \right) \circ \left( \varphi^{-1} \circ L_g \circ \varphi \right) \\
        &=& \Psi \circ L_g\,,
    \end{eqnarray*}
    where we use the notation $\Gamma^1_{\Phi}([p,v]) = \Gamma_{\Phi}(1,[p,v])$. This completes the proof.
\end{proof}

\begin{remark}
    Note that, just as in the definition of steerable NODEs in~\Cref{dfn:steerable_NODE}, the formulation in~\Cref{thm:equivariance_steerable_NODE_flow} is local in the sense that $G \times_{\rho} V$ is locally, but not in general globally, diffeomorphic to $M \times V$. However, the construction is independent of the choice of local section $\sigma$, which is implicit in~\Cref{thm:equivariance_steerable_NODE_flow}, and can be extended globally using the transition functions defining $G$ as a principal $H$-bundle.  
\end{remark}

We note that the converse of~\Cref{thm:equivariance_steerable_NODE_flow} is not true. Equivariance of the steerable NODE flow implies invariance of the vector field $\phi$ generating the corresponding flow on $M=G/H$. However, it does not guarantee invariance of $\omega$, only that a one-dimensional subspace of the horizontal component $H_gG \subset T_gG$ is preserved by the left action $L_g$. This is illustrated by the following example.
\begin{example}
    We return to the setting in~\Cref{ex:steerabl_NODE}, where $G = \R^2 \times U(1)$ is the trivial $H=U(1)$ bundle over $M=G/H=\R^2$ with the global $G$-action given by translations. Let $\omega = d\theta + f(y)dy$, for some function $f:\R \to \R$, which is clearly a 1-form on $G$ taking values in $\mathfrak{h} = \R$. This is a principal connection on $G$, because:
    \begin{enumerate}
        \item[(i)] It acts trivially on the vertical subspace, since
        \begin{equation}
            \omega(\partial_{\theta}) = d\theta(\partial_{\theta}) + f(y)dy(\partial_{\theta}) = 1 + f(y)\cdot0 = 1 \,.
        \end{equation}
        \item[(ii)] It is $H$-equivariant, since with $h=(0,0,\varphi) \in H$ we have $R_h(x,y,\theta) = (x,y,\theta+\varphi \, (\mathrm{mod}\, 2\pi))$ which implies
        \begin{equation}
            (R_h)^*d\theta = d\theta \,, \quad (R_h)^*(f(y)dy) = f(y)dy \,. 
        \end{equation}
        Consequently, $(R_h)^*\omega = \omega = \mathrm{Ad}_{h^{-1}}\omega$, where we have used that $\mathrm{Ad}_{h}$ is trivial for $H$ abelian.
    \end{enumerate}
    However, the connection $\omega$ is not $G$-invariant in general, since with $g = (0,\beta,0) \in G$ we have $L_g(x,y,\theta) = (x,y+\beta,\theta)$ and
    \begin{equation}
        (L_g)^*\omega = d\theta + f(y+\beta)dy \,.
    \end{equation}
    Thus, $\omega$ is invariant only if $f$ is constant. Taking $f$ non-constant gives a principal connection which is not $G$-invariant.

    We now construct a horizontal, $G$-equivariant flow on $G$ that projects to a $G$-equivariant flow on $M=G/H$. Let $\tilde{\Phi}$ be the flow generated by $\tilde{\phi} = \partial_x$. From~\eqref{eqn:counterex_left_action_G}, we have $(L_g)_*\tilde{\phi} = \tilde{\phi}$ so $\tilde{\phi}$ is $G$-invariant and $\tilde{\Phi}$ is $G$-equivariant. Furthermore, we have
    \begin{equation}
        \omega(\tilde{\phi}) = \omega(\partial_x) = d\theta(\partial_x) + f(y)dy(\partial_x) = 0 \,,
    \end{equation}
    which means that $\tilde{\phi}$ is horizontal with respect to $\omega$. Finally, $\tilde{\phi}$ projects to $\phi=\pi_*\tilde{\phi} = \pi_*(\partial_x) = \partial_x$. From~\eqref{eqn:counterex_left_action_M}, we have $(L_g)_*\phi = \phi$, so $\phi$ is a $G$-invariant vector field on $M$ generating a $G$-equivariant flow $\Phi$. Consequently, $\tilde{\Phi}$ has the desired properties.

    Extending to the parallel transport flow $\Gamma_{\Phi}$ on the associated bundle $G \times_{\rho} V$, with $\rho:U(1) \to GL(V)$, we have
    \begin{equation}
        \Gamma_{\Phi}(t,[g,v]) = [\tilde{\Phi}(t,g),v]\,.
    \end{equation}
    Equivariance of $\tilde{\Phi}$ implies equivariance of $\Gamma_{\Phi}$; see the proof of~\Cref{lem:equiv_lift_ass}. Moreover, $\Gamma_{\Phi}$ is horizontal by construction. Consequently, the steerable NODE $\Psi$ obtained from $\Gamma_{\Phi}$ is $G$-equivariant.
    
    This illustrates why the converse of~\Cref{thm:equivariance_steerable_NODE_flow} is not true. Equivariance of the flow $\tilde{\Phi}$ in the $x$-direction forces $\omega$ to be invariant under the corresponding translations, but it can still fail to be invariant under $y$-translations and therefore under the full group $G$.
\end{example}

\subsection{Equivariance in the local formulation}\label{sec:local_form_of_equiv}
We now consider equivariance of steerable NODEs in their local description, i.e., maps $\Psi:M\times V\to M\times V$ defined by
\begin{equation}
    \Psi(p,v)=(\Phi_p(1),\rho(h_p(1))v)\,,
\end{equation}
for all $(p,v)\in M\times V$, where $\Phi_p$ and $h_p$ are given by~\cref{dfn:steerable_NODE} for some vector field $\phi$ and connection $\omega$. Since $\Psi$ acts on the space $M\times V$, we must consider the left action on $M\times V$. Recall that according to~\eqref{eq:left_action_on_M_times_V} this action is given by
\begin{equation}
    L_g(p,v)=(gp,\rho(c(g,p))v),
\end{equation}
where $c:G\times M\to H$ is the map defined in \eqref{eq:cocycle}. In the following \namecref{lem:local_equiv}, we show that the equivariance of $\Psi$ under this action can be characterised by the equivariance of the vector field $\phi$ together with a condition involving the map $c$.

\begin{lemma}\label{lem:local_equiv}
    Let $\Psi:M\times V\to M\times V$ be a steerable NODE defined by the vector field $\phi: M\to TM$ and the principal $H$-connection $\omega$ on $G$. Then $\Psi$ is $G$-equivariant if and only if the vector field $\phi$ is $G$-invariant and
    \begin{equation}\label{eq:equivariance_vector_comp}
        c(g,\Phi_p(1))\, h_p(1)
        = h_{g p}(1)\, c(g,p)\, k(g,p)
    \end{equation}
    holds for all $g\in G$, $p\in M$ and some $k(g,p) \in \ker \rho$.
\end{lemma}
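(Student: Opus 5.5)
We prove the lemma by writing out both sides of the equivariance condition in \cref{dfn:equivariant_steerable_NODE} with the local formula $\Psi(p,v)=(\Phi_p(1),\rho(h_p(1))v)$ and the local action \eqref{eq:left_action_on_M_times_V}, and then comparing the two components separately. For $(p,v)\in M\times V$ and $g\in G$ we get
\begin{equation*}
    L_g\Psi(p,v) = \big(g\Phi_p(1),\, \rho(c(g,\Phi_p(1)))\rho(h_p(1))v\big)\,,
\end{equation*}
and
\begin{equation*}
    \Psi(L_g(p,v)) = \big(\Phi_{gp}(1),\, \rho(h_{gp}(1))\rho(c(g,p))v\big)\,.
\end{equation*}
Here we use that $\Phi_{gp}$ and $h_{gp}$ are the solutions of \eqref{eqn:steerable_NODE_base}--\eqref{eqn:steerable_NODE_connection} with initial point $gp$. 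Equivariance holds exactly when these agree for all $p$, $v$ and $g$.

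\emph{Base component.} This equality reads $g\Phi_p(1)=\Phi_{gp}(1)$ for all $g,p$, which says that the manifold NODE $\psi$ is $G$-equivariant. By the equivalence recalled after \cref{dfn:equivariant_mfd_NODE} (\cite{Kohler2020,Katsman2021,Andersdotter2025}), this is equivalent to $G$-invariance of $\phi$. This requires reading the NODE as the flow family, with time $1$ standing in for the flow, as the paper does throughout. I will state this identification explicitly, since equivariance of the time-one map alone does not literally force invariance of $\phi$.

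\emph{Fibre component.} The second components agree for all $v\in V$ if and only if
\begin{equation*}
    \rho\big(c(g,\Phi_p(1))\,h_p(1)\big)=\rho\big(h_{gp}(1)\,c(g,p)\big)\,,
\end{equation*}
using that $\rho$ is a homomorphism. Set
\begin{equation*}
    k(g,p):=\big(h_{gp}(1)\,c(g,p)\big)^{-1}c(g,\Phi_p(1))\,h_p(1)\in H\,.
\end{equation*}
The displayed equality is then equivalent to $\rho(k(g,p))=\id_V$, i.e.\ $k(g,p)\in\ker\rho$. Rearranging gives exactly \eqref{eq:equivariance_vector_comp}. Conversely, if \eqref{eq:equivariance_vector_comp} holds for some $k(g,p)\in\ker\rho$, applying $\rho$ removes $k$ and gives equality of the fibre components. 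Combined with the base component, this proves both directions.

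The only real subtlety is in the base component: making precise that ``$\Psi$ equivariant'' yields invariance of $\phi$, rather than only equivariance of the time-one map. I would handle this by noting that the whole argument goes through verbatim with $1$ replaced by arbitrary $t$, and by interpreting equivariance of the steerable NODE at the level of its flow, consistent with \cref{thm:equivariance_steerable_NODE_flow}. Everything else is bookkeeping with the homomorphism property of $\rho$ and the fact that $\ker\rho$ is a normal subgroup, so the position of $k(g,p)$ in the product does not matter.
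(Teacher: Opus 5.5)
Your proposal is correct and follows the paper's own proof: you expand $L_g\Psi(p,v)$ and $\Psi(L_g(p,v))$ via \eqref{eq:left_action_on_M_times_V}, read off invariance of $\phi$ from the base component, and reduce the fibre component to $\rho\big(c(g,\Phi_p(1))h_p(1)\big)=\rho\big(h_{gp}(1)c(g,p)\big)$, i.e.\ a $\ker\rho$ condition (your direct definition of $k(g,p)$ also makes explicit the placement of the kernel element, which the paper passes over). Your caveat on the base component is well-founded: the paper asserts that step directly, but equivariance of the time-one map alone does not force invariance of $\phi$ (take $G=M=\R$ acting by translations with $H$ trivial; any non-constant $1$-periodic $f>0$ with $\int_0^1 dx/f(x)=1$ gives $\Phi_x(1)=x+1$ for $\phi=f\,\partial_x$), so reading equivariance at the level of the flow, with $1$ replaced by arbitrary $t$ as you propose, is exactly what the ``only if'' direction needs.
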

\begin{proof}
    Using the definition of the left action of $G$ on $M\times V$ in~\eqref{eq:left_action_on_M_times_V}, we compute
    \begin{equation}
        L_g \Psi(p, v) = (L_g \Phi_p(1), \rho(c(g, \Phi_p(1)) h_p(1)) v)
    \end{equation}
    and
    \begin{equation}
        \Psi(L_g(p, v)) = \big(\Phi_{gp}(1), \rho(h_{gp}(1)c(g,p)) v \big)
    \end{equation}
    for $g \in G$, $p \in M$, and $v \in V$. These expressions agree for all $g \in G$, $p \in M$, and $v \in V$ if and only if $\phi$ is $G$-invariant and $\rho(c(g, \Phi_p(1)) h_p(1)) v=\rho(h_{gp}(1)c(g,p)) v$ holds for all $g\in G$, $p\in M$ and $v\in V$. The second condition is equivalent to
    \begin{equation}
        c(g, \Phi_p(1)) h_p(1) \left(h_{gp}(1)c(g,p)\right)^{-1}\in\ker{\rho},
    \end{equation}
    yielding~\eqref{eq:equivariance_vector_comp}.
\end{proof}

The local description of equivariance provided in~\cref{lem:local_equiv} can be visualised by the commutative diagram in~\cref{fig:commutative_diagram}.

\begin{figure}[!t]
\centering
\includegraphics[scale=0.6]{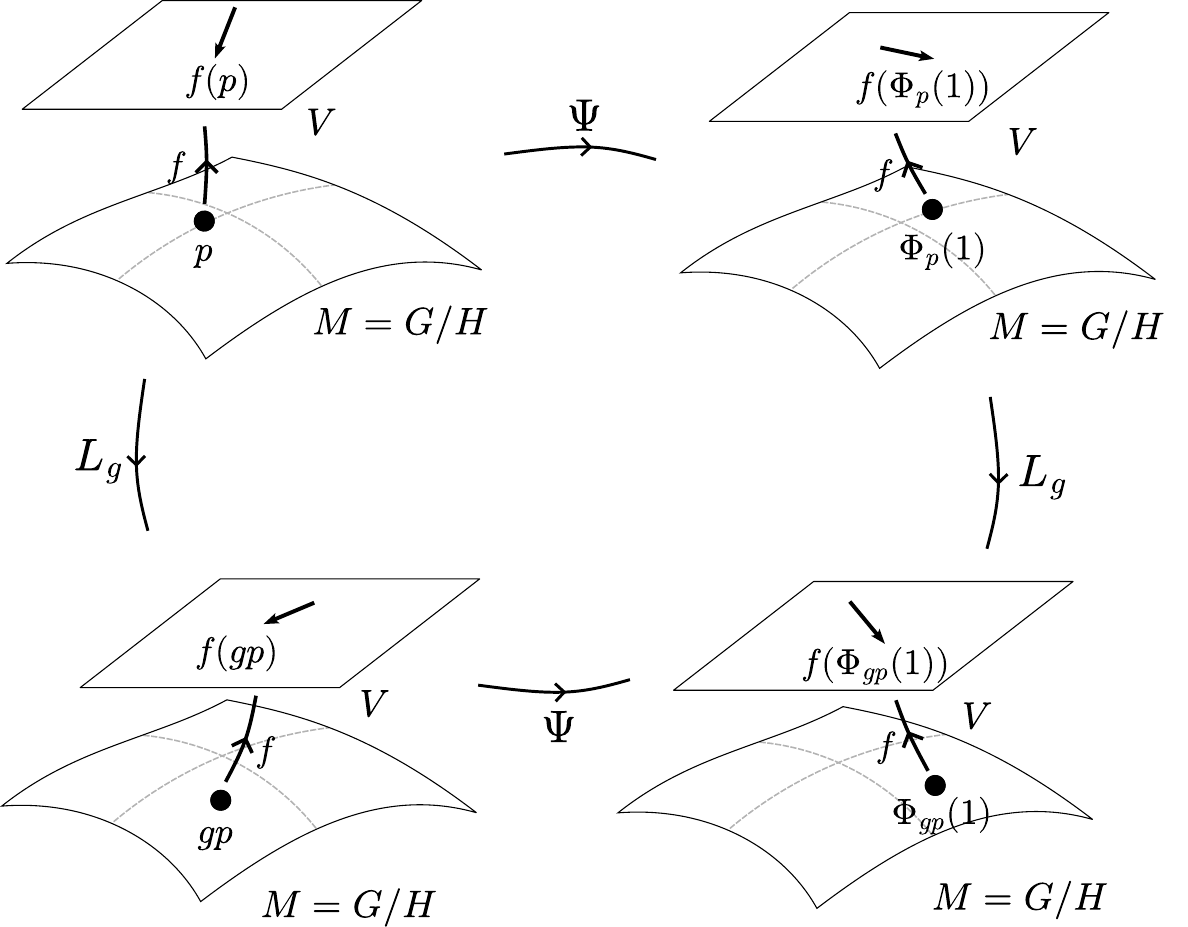}
\caption{An equivariant steerable NODE has the following commutative diagram. It is clear from the diagram that $L_g\Phi_p(1)=\Phi_{gp}(1)$ must hold. Moreover, it follows from how features $f(p)$ are transformed under $\Psi$ (\cref{dfn:steerable_NODE}) and how $L_g$ acts on local feature fields (\Cref{tab:feature_field_transformation}) that $f(\Phi_p(1))=\rho(h_p(1))f(p)$, $f(gp)=\rho(c(g,p))f(p)$, and $f(\Phi_{gp}(1))=\rho(c(g,\Phi_p(1))h_p(1))f(p)=\rho(h_{gp}(1)c(g,p))f(p)$, where the latter condition is \eqref{eq:equivariance_vector_comp}. Compare with~\cite[Figure 8]{Andersdotter2025}.}
\label{fig:commutative_diagram}
\end{figure}

The local equivariance condition in~\Cref{lem:local_equiv} is automatically satisfied under the geometric assumptions of~\cref{thm:equivariance_steerable_NODE_flow}. Suppose $\phi$ and $\omega$ are $G$-invariant. From the construction of $\Psi$, the curves $h_p$ and $h_{gp}$ satisfy $h_p(0) = h_{gp}(0) = e$ and define two unique horizontal lifts $\tilde{\Phi}_p(t) = \sigma(\Phi_p(t)) h_p(t)$ and $\tilde{\Phi}_{gp}(t) = \sigma(\Phi_{gp}(t)) h_{gp}(t)$ over $\Phi_p$ and $\Phi_{gp}$, respectively. 

Since $\omega$ is $G$-invariant, the curve $L_g\tilde{\Phi}_p$ must also be horizontal for all $g\in G$ and $p\in M$. Thus, for each $p$ and $g$, there exists a unique element $h(g,p)\in H$ such that $L_g \tilde{\Phi}_p(t) = \tilde{\Phi}_{gp}(t)h(g,p)$ for all $t$.

Left translation gives
\begin{equation}
    L_g \tilde{\Phi}_p(t) 
= \sigma(L_g \Phi_p(t)) c(g, \Phi_p(t)) h_p(t)\,.
\end{equation}
In particular, we have
\begin{equation}\label{eq:eq_at_0}
    L_g\tilde{\Phi}_p(0)=\tilde{\Phi}_{gp}(0)c(g,p)\,,
\end{equation}
which implies $h(g,p)=c(g,p)$. Substituting $\tilde{\Phi}_p(t) = \sigma(\Phi_p(t)) h_p(t)$ and $\tilde{\Phi}_{gp}(t) = \sigma(u_{gp}(t)) h_{gp}(t)$ yields
\begin{equation}
    c(g, \Phi_p(t))\, h_p(t) = h_{gp}(t)\, c(g,p).
\end{equation}
For $t = 1$, this corresponds to~\eqref{eq:equivariance_vector_comp} with $k=e$.

We conclude this section with an example that illustrates the local description of equivariance.

\begin{example}
    We return again to~\Cref{ex:steerabl_NODE}, and the steerable NODE defined by $\phi = \partial_x$ and $\omega = \pi dx + dy + d\theta$. Invariance of $\phi$ and $\omega$ is immediate since $G$ acts by translations. To determine whether $\Psi(x,y,v) = (x+1,y,R(\pi)v)$ is equivariant, in the sense of~\cref{dfn:equivariant_steerable_NODE}, we need to determine the function $c(g,p)$ for the section $\sigma(x,y) = (x,y,\chi)$. With $g = (\alpha,\beta,\varphi) \in G$, we find $c(g,p) = \varphi$, independently of $p \in M$.

    For $p = (x,y) \in \R^2$ and $v \in \R^2$, we then have
    \begin{equation}
        L_g (p,v) = (g p,\rho(c(g,p))v) = (x+\alpha,y+\beta,R(\varphi)v) \,,
    \end{equation}
    and consequently
    \begin{equation}
        \Psi(L_g(p,v)) = (x+\alpha+1,y+\beta,R(\pi)R(\varphi)v)\,.
    \end{equation}
    Interchanging the order of $\Psi$ and $L_g$, we obtain
    \begin{equation}
        L_g(\Psi(p,v)) = L_g(x+1,y,R(\pi)) = (x+1+\alpha,y+\beta,R(\varphi)R(\pi)v) \,.
    \end{equation}
    Clearly, since $H$ is abelian, $R(\varphi)R(\pi) = R(\pi)R(\varphi)$ and we have $L_g \circ \Psi = \Psi \circ L_g$ as expected from~\cref{thm:equivariance_steerable_NODE_flow}.

    We can also verify equivariance of $\Psi$ by checking the conditions in~\Cref{lem:local_equiv}. The vector field $\phi$ is invariant by construction and $c(g,p)=\varphi$. The map $h_p$ was computed in~\cref{ex:steerabl_NODE} to be $h_p(t)=-\pi t$, independent of the point $p$. Thus
    \begin{equation}
        c(g,\Phi_p(t))h_p(t)=h_{gp}(t)c(g,p)
    \end{equation}
    holds, which is the condition in~\eqref{eq:equivariance_vector_comp} with $k=e$.
\end{example}

\subsection{Invariant connections and Wang's theorem}
\label{sec:Wangs_theorem}
We have seen that steerable NODEs are equivariant if both the vector field $\phi$ and the connection $\omega$ are invariant. When we treat these as learnable quantities, we therefore need to understand how to parametrise the spaces of invariant vector fields and principal connections. In~\cite{Andersdotter2025}, it was shown that invariant vector fields on $M$ can be parametrised using the differential invariants of the action of $G$ on $M$. For invariant principal connections on $G$, a corresponding parametrisation can be accomplished using a theorem by Wang. Proposition A of~\cite{Wang1958} provides, as a special case, the following classification of $G$-invariant connections on $G$ as a principal $H$-bundle over $M=G/H$.

\begin{restatable}[Wang's theorem~\cite{Wang1958}]{theorem}{WangTheorem}\label{thm:wang}
    The $G$-invariant connections $\omega$ on $G$ are in one-to-one correspondence with linear maps $\Lambda:\mathfrak{g} \to \mathfrak{h}$ that satisfy
    \begin{itemize}
        \item[(i)] $\Lambda \circ \mathrm{Ad}_h = \mathrm{Ad}_h \circ \Lambda, \quad \forall h \in H$,
        \item[(ii)] $\left.\Lambda\right|_{\mathfrak{h}} = \left.\mathrm{id}\right|_\mathfrak{h}$,
    \end{itemize}
    where $\mathrm{Ad}$ is the adjoint action of $H$ on its Lie algebra $\mathfrak{h}$.
\end{restatable}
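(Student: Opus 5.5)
The plan is to reduce a $G$-invariant connection to its value at the identity. A left-invariant $\mathfrak{h}$-valued 1-form on $G$ is determined by its value $\omega_e$ on $T_eG\cong\mathfrak{g}$, because $\omega_g = (L_{g^{-1}})^*\omega_e$. We therefore set $\Lambda := \omega_e : \mathfrak{g}\to\mathfrak{h}$. Conversely, given $\Lambda$, we define $\omega_g(X) := \Lambda\big((L_{g^{-1}})_*X\big)$, which is the $\mathfrak{h}$-valued form $\Lambda\circ\theta$, where $\theta$ is the Maurer--Cartan form. This form is left-invariant by construction. The two assignments are clearly mutually inverse. What remains is to show that the connection axioms (i) and (ii) of \cref{def:connection_1_form} for $\omega$ are equivalent to conditions (i) and (ii) on $\Lambda$ in the statement.

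First we treat the vertical condition. The fundamental vector field of $A\in\mathfrak{h}$ for the right action is $A^\#(g)=\frac{d}{dt}g\exp(tA)|_{t=0}=(L_g)_*A$, i.e.\ the left-invariant vector field generated by $A$. Hence $\omega_g(A^\#(g))=\Lambda((L_{g^{-1}})_*(L_g)_*A)=\Lambda(A)$. Requiring this to equal $A$ for all $g$ is exactly $\Lambda|_{\mathfrak{h}}=\mathrm{id}_{\mathfrak{h}}$. The argument works in both directions: if $\omega$ is a connection, evaluating at $g=e$ gives (ii).

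Next we treat equivariance. We compute $(R_h^*\omega)_g(X)=\omega_{gh}((R_h)_*X)=\Lambda\big((L_{(gh)^{-1}})_*(R_h)_*X\big)$. Since $L_{h^{-1}g^{-1}}\circ R_h = L_{h^{-1}}\circ R_h\circ L_{g^{-1}}$, the argument equals $\mathrm{Ad}_{h^{-1}}\big((L_{g^{-1}})_*X\big)$, because conjugation $c_{h^{-1}}=L_{h^{-1}}R_h$ differentiates to $\mathrm{Ad}_{h^{-1}}$ at $e$. Thus $R_h^*\omega = \Lambda\circ\mathrm{Ad}_{h^{-1}}\circ\theta$, while $\mathrm{Ad}_{h^{-1}}\omega=\mathrm{Ad}_{h^{-1}}\circ\Lambda\circ\theta$. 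Since $\theta_g$ is surjective onto $\mathfrak{g}$ at every $g$, the identity $R_h^*\omega=\mathrm{Ad}_{h^{-1}}\omega$ holds for all $h\in H$ if and only if $\Lambda\circ\mathrm{Ad}_{h^{-1}}=\mathrm{Ad}_{h^{-1}}\circ\Lambda$ for all $h\in H$. Because $h\mapsto h^{-1}$ is a bijection of $H$, this is condition (i). Here $\mathrm{Ad}_h$ on the left of the condition is the adjoint action of $H$ on $\mathfrak{g}$, and on the right its restriction to $\mathfrak{h}$, which preserves $\mathfrak{h}$ because $H$ is a subgroup.

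The main subtlety is bookkeeping rather than depth. We must interpret "$G$-invariant" as $(L_g)^*\omega=\omega$, matching the convention $(L_g)^*\omega_{g'}=\omega_{g^{-1}g'}$ used in \cref{lem:equiv_lift}, and we must get the conjugation identity with the correct side and inverse. These two calculations, together with the observation that the fundamental fields are left-invariant, are the steps to check carefully. Wang's general Proposition A, which covers arbitrary $G$-bundles with isotropy, is not needed. The special case of the principal bundle $G\to G/H$ with $G$ acting on itself by left translation is handled entirely by the Maurer--Cartan reduction above.
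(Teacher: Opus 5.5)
Your proposal is correct and follows essentially the same route as the paper's appendix proof. Both set $\Lambda=\omega_e$ and reconstruct $\omega_g=(L_{g^{-1}})^*\Lambda$, check the vertical condition via $A^\#(g)=(L_g)_*A$, and check $H$-equivariance through $L_{(gh)^{-1}}\circ R_h=L_{h^{-1}}\circ R_h\circ L_{g^{-1}}$, whose differential produces $\mathrm{Ad}_{h^{-1}}$. The only difference is cosmetic: you get both directions of the equivariance equivalence at once from $R_h^*\omega=\Lambda\circ\mathrm{Ad}_{h^{-1}}\circ\theta$ and the surjectivity of $\theta_g$, whereas the paper verifies the converse separately at the identity using $R_{h^{-1}}$. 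Your remark that $\mathrm{Ad}_h$ on the left of (i) must act on $\mathfrak{g}$ is a correct clarification of the statement.
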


More concretely, the theorem provides explicit constructions of the linear map $\Lambda$ and invariant connection $\omega$. Given a $G$-invariant connection $\omega$, the associated linear map is defined as
\begin{equation}
    \Lambda := \omega_{e}\,.
\end{equation}
Conversely, given the linear map $\Lambda$, the corresponding invariant connection is defined by
\begin{equation}
    \omega_g:=(L_{g^{-1}})^*\Lambda \,,
\end{equation}
for all $g\in G$. In~\cref{app:proof_of_wang}, we provide a self-contained proof of~\cref{thm:wang}.

We will now illustrate how this construction allows us to parametrise the most general invariant connections in two familiar examples.

\begin{example}\label{ex:wang_R2}
We first consider the simplest possible example $G=\mathbb{R}^2$, $H=\mathbb{R}$, $M=G/H=\mathbb{R}$, where both left and right actions are translations. The Lie algebras are $\mathfrak{g}=\mathbb{R}^2$, $\mathfrak{h} = \mathbb{R}$, and the adjoint action of $H$ is trivial since it is abelian, $\mathrm{Ad}_h=\id_{\mathfrak{h}}$. We write an element of $\mathfrak{g}$ as $X=[\alpha\,\,\beta]^T$ and an element of $\mathfrak{h}$ as $Y=\beta$, and their images under the exponential map as
\begin{equation}
    g = \exp(X) = (\alpha,\beta) \,,\quad h = \exp(Y) = \beta\,.
\end{equation}

We now construct all $G$-invariant principal connections on $G$ using Wang's theorem. A linear map $\Lambda:\mathfrak{g} \to \mathfrak{h}$ is parametrised as matrix $\Lambda = [a\,\,b]$ and acts on $X$ as
\begin{equation}
    \Lambda(X) = [a \,\, b][\alpha \,\, \beta]^T = a\alpha+b\beta \,.
\end{equation}
We want to describe the most general map $\Lambda$ satisfying the conditions on $\Lambda$ in~\cref{thm:wang}. Condition (i) is trivial since $\mathrm{Ad} = \id_{\mathfrak{h}}$. Acting on $X = [0\,\,\beta]^T \in \mathfrak{h} \subset \mathfrak{g}$ yields
\begin{equation}
    \Lambda(X) = [a\,\,b][0\,\,\beta]^T = b\beta\,.
\end{equation}
Condition (ii) amounts to $\Lambda(X) = \beta$, which is equivalent to $b=1$. Consequently, the linear maps satisfying both (i) and (ii) are
\begin{equation}
\label{eqn:wang_maps_R2_example}
    \Lambda = [a\,\,1] \,,\quad a\in\mathbb{R}\,,
\end{equation}
which is a 1-parameter family of maps. The corresponding 1-parameter family of $G$-invariant connections is obtained by first defining
\begin{equation}
    \omega_e := \Lambda.
\end{equation}
With $X = [\alpha\,\,\beta]^T$, we then have
\begin{equation}
    \omega_e(X) = \Lambda(X) = [a\,\,1][\alpha\,\,\beta]^T = a\alpha+\beta \in \mathfrak{h}\,.
\end{equation}
Subsequently, we define $\omega \in \Omega^1(G,\mathfrak{h})$ as
\begin{equation}
\label{eqn:inv_connection_R2_example}
    \omega_g := (L_g^{-1})^* \omega_e, \ \ \ \forall g\in G\,.
\end{equation}
Because $G$ acts as translations we have $(L_g)^* = \id$, implying that $\omega$ is a constant form on $G$.

We conclude this example by providing some intuition for the 1-parameter family of invariant connections defined by~\eqref{eqn:inv_connection_R2_example} and~\eqref{eqn:wang_maps_R2_example}. To this end, we first consider the horizontal subspace defined by the connection $\omega$ corresponding to the linear map $\Lambda = [a\,\,1]$. At the identity, this subspace is defined as the kernel $H_eG =\ker \omega_e$, which for $v=[\alpha\,\,\beta]$ amounts to
\begin{equation}
    \omega_e(v) = a\alpha + \beta = 0\,.
\end{equation}
Consequently, $H_eG \subset T_eG \cong \mathbb{R}^2$ is the straight line $\beta = -a \alpha$, which corresponds to the expected 1-parameter family of possible decompositions $T_eG = H_eG \oplus V_eG$ obtained by specifying a slope $a$ of $H_eG$. Since the connection $\omega$ is constant, the decomposition $T_gG = H_gG \oplus V_gG$ is identical for all $g \in G$, corresponding to a constant slope throughout $G$. As expected, this defines an invariant decomposition of the tangent bundle $TG$.

Finally, we consider the vector field $\phi = \partial_x$ on $M$, which is $G$-invariant by virtue of $(L_g)_* = \id$. The integral curve through the origin in $M$ is $\Phi_0(t) = t$ and its horizontal lift through $(0,0) \in G$ is $\tilde{\Phi}_0(t) = (t,-at)$, since
\begin{equation}
    \dot{\tilde{\Phi}}_0(t) = \frac{d}{dt}\tilde{\Phi}_0(t) = [1 \,\, -a]
\end{equation}
which satisfies $\Lambda(\dot{\tilde{\Phi}}_0(t)) = [a\,\,1][1\,\,-a]^T = 0$. The parallel transport of $(0,0)$ along $u(t)$ is consequently $(1,-a)$. The generalisation to an arbitrary initial point $(x,y) \in G$ is straightforward.
\end{example}

\begin{example}\label{ex:wang_SO3}
We consider the case $G=\SO(3)$, $H=\SO(2)$ and $M=G/H=\SO(3)/\SO(2)\cong S^2$. The canonical projection $\pi:G\to M$ sends a matrix $R\in \SO(3)$ to its first column. To make $\pi$ invariant under the right action of $H$, we embed $H$ in $G$ as the subgroup of rotations about the first coordinate axis, i.e.,
\begin{equation}
    H \hookrightarrow G, \qquad
    \begin{pmatrix}
    a_{11} & a_{12} \\
    a_{21} & a_{22}
    \end{pmatrix} \mapsto
    \begin{pmatrix}
    1 & 0 & 0 \\
    0 & a_{11} & a_{12} \\
    0 & a_{21} & a_{22}
    \end{pmatrix}\,.
\end{equation}

The Lie algebra $\mathfrak{g}=\mathfrak{so}(3)$ is spanned by
\begin{equation}
    X_1 = \begin{pmatrix}
    0 & 0 & 0 \\
    0 & 0 & -1 \\
    0 & 1 & 0
    \end{pmatrix}, \ \
    X_2 = \begin{pmatrix}
    0 & 0 & 1 \\
    0 & 0 & 0 \\
    -1 & 0 & 0
    \end{pmatrix}, \ \
    X_3 = \begin{pmatrix}
    0 & -1 & 0 \\
    1 & 0 & 0 \\
    0 & 0 & 0
    \end{pmatrix},
\end{equation}
while the Lie algebra $\mathfrak{h}$ is spanned by $X_1$. Thus, a general element of $\mathfrak{g}$ can be written as
\begin{equation}
    X=a_1X_1+a_2X_2+a_3X_3, \qquad a_1,a_2,a_3\in\R,
\end{equation}
and an element in $\mathfrak{h}$ has the form $Y = bX_1$, where $b\in\R$. The exponentiation of $Y$ gives a general element in $H$ of the form
\begin{equation}
    h = \exp(Y) = \exp(bX_1)=
    \begin{pmatrix}
        1 & 0 & 0 \\
        0 & \cos(b) & -\sin(b) \\
        0 & \sin(b) & \cos(b)
    \end{pmatrix}.
\end{equation}

A general linear mapping $\Lambda:\mathfrak{g}\to\mathfrak{h}$ is given by
\begin{equation}
    \Lambda(X) = \Lambda(a_1X_1+a_2X_2+a_3X_3)=(a_1c_1+a_2c_2+a_3c_3)X_1,
\end{equation}
where $c_1,c_2,c_3\in\R$ are constants defining $\Lambda$. To impose the conditions in~\cref{thm:wang}, we first compute the adjoint action $\mathrm{Ad}_h:\mathfrak{g}\to\mathfrak{g}$ defined by
\begin{equation}
    \mathrm{Ad}_h(X)=hXh^{-1}\,,
\end{equation}
where we use matrix notation for the induced action of $H$ on $\mathfrak{g}$.

Using orthogonality, we immediately obtain $\mathrm{Ad}_h(X_1)=X_1$, $\mathrm{Ad}_h(X_2)=\cos(b)X_2+\sin(b)X_3$, and $\mathrm{Ad}_h(X_3)=-\sin(b)X_2+\cos(b)X_3$. Thus,
\begin{equation}
    \mathrm{Ad}_h(X) = a_1X_1 + (a_2\cos(b)-a_3\sin(b))X_2 + (a_2\sin(b)+a_3\cos(b))X_3.
\end{equation}
Applying $\Lambda$ gives
\begin{equation}\label{eq:Lambda_Ad}
    \Lambda(\mathrm{Ad}_h(X))=\left(a_1c_1 + (a_2\cos(b)-a_3\sin(b))c_2 + (a_2\sin(b)+a_3\cos(b))\,c_3\right)X_1.
\end{equation}
Moreover, since $\mathrm{Ad}_h$ acts trivially on $\mathfrak{h}$,
\begin{equation}\label{eq:Ad_Lambda}
    \mathrm{Ad}_h(\Lambda(X))=\Lambda(X)=(a_1c_1+a_2c_2+a_3c_3)X_1 \,.
\end{equation}
For condition (i) in~\cref{thm:wang} to hold, \eqref{eq:Lambda_Ad} and \eqref{eq:Ad_Lambda} must be equal for all $h$, forcing $c_2=c_3=0$. Condition (ii) of~\cref{thm:wang} requires that $\Lambda$ restricted to $\mathfrak{h}$ is the identity mapping. Since $\Lambda(X_1)=c_1X_1$, this implies $c_1=1$.

We conclude that there is a unique linear map $\Lambda:\mathfrak{g}\to\mathfrak{h}$ satisfying both conditions in~\cref{thm:wang}:
\begin{equation}
    \Lambda(a_1X_1+a_2X_2+a_3X_3)=a_1X_1\,.
\end{equation}
By~\cref{thm:wang}, this determines a unique $G$-invariant connection $\omega$ given by
\begin{equation}
    \omega_g=(L_{g^{-1}})^*\Lambda\,,
\end{equation}
for each $g\in G$.
\end{example}

\section{Relation to existing NODE models}\label{sec:existingNODEs}
Having defined steerable NODEs on homogeneous spaces $M=G/H$ and derived conditions for equivariance under the global symmetry group $G$, we now turn to the relation of these models to the existing literature on NODEs and their applications. First, we consider the relation to previous constructions where an action on tangent vectors on $M$ is induced by the flow. Second, we show how our steerable NODEs extend and generalise existing work on continuous normalizing flows on Lie groups. Finally, we provide a detailed discussion of steerable NODEs in the specific case $M = S^2$ to illustrate our framework in a familiar geometric setting common in applications.

\subsection{Parallel transport and the push-forward}\label{sec:parallel_transport_push_forward}
In our previous work~\cite{Andersdotter2025}, we showed that a manifold NODE $\psi:M\to M$ induces an action on tangent vector fields $X:M\to TM$ through the push-forward. This is analogous to the construction of CNFs, which instead induce an action through the push-forward of densities. To demonstrate how the induced action on vector fields is related to steerable NODEs, in the setting of homogeneous spaces, we now show an example of how the push-forward can be expressed as parallel transport along a flow for a specific connection $\omega$. 

\begin{example}\label{ex:push_forward_R2}
We return to the setting in~\cref{ex:wang_R2}, where $G=\R^2$ acts by translations on $M=G/H = \R$ and let $\psi:M\to M$ be the manifold NODE defined by the vector field $\phi = \partial_x$. Since $(L_g)_* = \id$, the vector field $\phi$ is invariant, and the manifold NODE $\psi$ is equivariant. The integral curves of $\phi$ are $\Phi_x(t) = x+t$, for every $x\in M$, and, consequently, we have $\Psi(x) = x+1$. Applying the action induced by the push-forward of $\psi$ to a vector $w \in T_xM \cong \R$ in the tangent space to $M$ yields
\begin{equation}
\label{eqn:push_forward_ex_push_forward}
    \psi_*(w) = d\psi(w) = w \,.
\end{equation}

To express the induced action using a steerable NODE $\Psi$, we first take $V \cong \R$ so that the feature vectors have the correct dimension to be able to represent tangent vectors. In the steerable NODE context, the equivalent condition to~\eqref{eqn:push_forward_ex_push_forward}, that the transformation of the feature vector should be trivial, must be accompanied by a choice of section $\sigma:M\to G$ of the principal bundle relative to which the vector component should be constant. This is due to the fact that, unlike the situation for the push-forward acting on the tangent space, there is no canonical reference in the vector space $V$ in the steerable NODE setting\footnote{This can be understood as the vector space $V \cong \R$ being disconnected from its geometrical origin representing tangent vectors. Establishing the connection between $V$ and $T_xM$ requires a choice of section relating the two.}.

We consider the section
\begin{equation}
\label{eqn:push_forward_ex_section}
    \sigma(x) = (x,f(x)) \,, \quad x \in M \,,
\end{equation}
where $f:\R \to \R$ is a smooth function. With $\phi = \partial_x$, we then have
\begin{equation}
    \sigma_*\phi_{\Phi_x(t)} = (1,f'(x+t)) \,,
\end{equation}
and, using $(R_h)_* = \id$ for translations $h \in H$, we obtain the parallel transport equation
\begin{equation}
\label{eqn:push_forward_ex_parallel_transport}
    \frac{d h_x(t)}{dt} = -\omega(\sigma_* \phi_{\Phi_x(t)}) = -(a(x+t) + f'(x+t)) \,, \quad h_x(0)=0\,.
\end{equation}
Here, we have used the fact that any connection $\omega$, corresponding to a decomposition of the tangent space of the total space $G$, can be expressed as $\omega_g = [a(\pi(g))\,\,1]$, $g \in G$, for some smooth function $a : M \to \R$. This decomposition is equivariant under right translations by $h \in H$, as required, since $a$ depends only on the point $\pi(g)$ in $M$.

The condition that the steerable NODE preserves the feature vector (relative to the section $\sigma$) then amounts to $h_x(1) = 0$. From~\eqref{eqn:push_forward_ex_parallel_transport}, we see that this condition is in particular satisfied if
\begin{equation}
    a(x) = -f'(x) \,, \quad \forall x\in M\,.
\end{equation}
Furthermore, if the section is $G$-invariant, $\sigma(gx) = \sigma(x)$ for any $g \in G$, we have $a(x)=-f'(x)=0$ for any $x \in M$, which corresponds to an invariant connection, as expected.
\end{example}

\subsection{Continuous normalizing flows on $G/H$}
In this section, we demonstrate that our framework captures continuous normalizing flows on Lie groups. In particular, we connect to the normalizing flows for lattice gauge theories studied in \cite{Gerdes2024}. We demonstrate that our framework provides a generalisation of this to non-trivial stabiliser subgroups $H\subset G$.

\subsubsection{Continuous normalizing flows on a Lie group $G$}
Continuous normalizing flows (CNFs) provide a framework for learning probability densities via diffeomorphic transformations generated by NODEs. Recently, CNFs have been applied to lattice gauge theories, where the data take values in Lie groups and are subject to gauge symmetries~\cite{Gerdes2024}. In this section, we show that these constructions can be interpreted as a special case of steerable neural ODEs on homogeneous spaces, corresponding to the situation where the stabiliser subgroup $H$ is trivial. 

Let $G$ be a compact Lie group equipped with Haar measure $\mu_G$, and let $p_0 : G \to \mathbb{R}_{\ge 0}$ be an initial probability density. In lattice gauge theories, one usually takes $G=SU(N)$, but there is no need to make that restriction here. A CNF on $G$ is defined by a vector field $\phi : G \to TG$ and the associated Cauchy problem
\begin{equation}\label{eqn:CNF-ODE}
    \dot g(t) = \phi(g(t))\,, \qquad g(0) = g_0 \in G\,, 
\end{equation}
where $g(t)\in G$ is the solution curve of the ODE on the group $G$, generated by $\phi$. In the notation of section \ref{sec:steerable_NODEs} we have $g(t)=\Phi_{g_0}(t)$. 

The solution defines a flow $\Phi : \R \times G \to G$, and the time-evolved density $p_t$ satisfies the continuity equation
\begin{equation}\label{eqn:CNF-continuity}
    \frac{d}{dt} \log p_t(g(t)) = - \operatorname{div}_G \phi(g(t)),
\end{equation}
where $\operatorname{div}_G$ denotes the divergence with respect to the Haar measure $\mu_G$, defined by the equation $\mathcal{L}_{\phi}\mu_G = (\operatorname{div}_G \phi)\mu_G$.

In \cite{Gerdes2024}, the vector field $\phi$ is constructed to be equivariant under gauge transformations, ensuring that the induced flow preserves gauge symmetry of the target distribution. Importantly, the flow acts only on the group-valued variables themselves, without transporting additional feature fields. Moreover, to see the precise connection to \cite{Gerdes2024}, one should choose 
\begin{equation}  
    \phi(g(t))=Z_\theta(g)g(t)\,,
\end{equation}
where $Z_\theta : G \to \mathfrak{g}$ is a neural network with parameters $\theta$. If we set $G=SU(N)$ and fix some basis $\{T_a\}$ of the Lie algebra $\mathfrak{su}(N)$ we can write the divergence in \ref{eqn:CNF-continuity} explicitly as $\sum_a \partial_a Z^{a}_\theta(g)$. Moreover, in a lattice gauge theory setting, the group element $g$ would be replaced by a collection of elements $\{g_e\}_{e\in E}$, where $e \in E$ labels the edges of the lattice.

\subsubsection{Connection with NODEs on $G/H$}
This setting fits naturally into our framework by viewing the Lie group $G$ as a homogeneous space $M = G/H$ with $H = \{e\}$. In this case, the principal bundle $G \to G/H$ is trivial, and the base manifold is canonically identified with $G$ itself. A manifold neural ODE on $M$ in the sense of Definition~\ref{dfn:mfd_NODE} is therefore exactly an ODE of the form~\eqref{eqn:CNF-ODE}.

Since $H$ is trivial, there is no non-trivial local symmetry acting on the fibres, and the associated bundle with fibre $V$ reduces to the product bundle
\begin{equation}
G \times V \to G\,.
\end{equation}
Feature fields are simply maps $f : G \to V$, and there is no need to introduce a representation $\rho$ or a connection to ensure consistency under local gauge transformations.

From this perspective, the CNFs of \cite{Gerdes2024} correspond precisely to manifold NODEs on $G$, without steering of additional features. The equivariance constraints imposed on $\phi$ in that work ensure that the flow $\Phi$ respects the relevant global symmetries.

In the language of principal bundles, the vector fields used in~\cite{Gerdes2024} can be interpreted as left- or right-invariant vector fields on $G$, or more generally, as equivariant maps $\phi : G \to TG$ satisfying
\begin{equation}
    \phi(g g') = (L_g)_* \phi(g') \qquad \forall g,g' \in G \,.
\end{equation}
This is the condition required for the diffeomorphism induced by $\Phi$ to be $G$-equivariant in the sense of~\Cref{dfn:equivariant_mfd_NODE}. Since the bundle is trivial in this case, there is no distinction between vertical and horizontal directions, and no additional geometric structure is required to define the flow.

While the lattice gauge theory literature refers to this as ``gauge equivariance'' (stemming from the redundancy in the physical description), in our geometric framework on the homogeneous space $M=G$, this manifests as a global symmetry under the group action of $G$ on itself. This is precisely the condition required for the induced diffeomorphism $\Phi$ to be $G$-equivariant in the sense of~\Cref{dfn:equivariant_mfd_NODE}.

Although no explicit feature transport is introduced in standard CNFs, probability densities can be understood as geometric objects induced by the diffeomorphism $\Phi$. 
Equation~\eqref{eqn:CNF-continuity} arises from the Jacobian determinant of the flow and corresponds to the push-forward of the density under $\Phi$.
In this sense, CNFs already make implicit use of induced actions, but restricted to scalar-valued densities rather than general feature fields.

Our framework becomes richer when $H$ is non-trivial. In this case, the base manifold is a quotient space $M = G/H$, and features transform in a representation $\rho : H \to \mathrm{GL}(V)$. Feature fields are sections of the associated bundle $G \times_\rho V$, and their evolution along a NODE trajectory is governed by parallel transport with respect to a connection $\omega$. From a physics perspective, this corresponds to systems with internal degrees of freedom that transform under a local symmetry group. For instance, it could correspond to some gauge covariant matter fields that transform with respect to $H$. Another concrete interpretation might be a molecular dipole moment attached to a rotating rigid body, extending existing flow-based protein generation models~\cite{Yim2023,Bose2024,GarciaSatorras2021} to incorporate non-scalar features with consistent transformation properties.

In such settings, the base flow on $G/H$ describes the evolution of the physical configuration, while parallel transport in the associated bundle encodes how internal features are rotated or transformed consistently along the trajectory.

We conclude that CNFs on Lie groups correspond to the simplest case of steerable NODEs, where $H$ is trivial, and no internal feature transport is required. Our construction therefore provides a geometric generalisation of CNFs to homogeneous spaces with non-trivial stabilisers.

\subsection{Example: Steerable NODEs on $S^2$}
We now illustrate our framework in the special case of the homogeneous space
\begin{equation}
    M = \mathrm{SO}(3)/\mathrm{SO}(2) \simeq S^2 \,.
\end{equation}
This example highlights the role of invariant connections and clarifies the interpretation of steerable NODEs with non-trivial stabiliser subgroup. See also Example \ref{ex:wang_SO3} for more details on this setting. 

The group $G = \mathrm{SO}(3)$ acts transitively on the unit sphere $S^2$ by rotations. Fix a reference point $p_0 \in S^2$. The stabiliser subgroup of $p_0$ is $H = \mathrm{SO}(2)$. For example, $p_0$ can be chosen so that $SO(2)$ acts as rotations about the $x$-axis. This identifies $S^2$ with the homogeneous space $G/H$.

The principal bundle
\begin{equation}
    \pi : \mathrm{SO}(3) \to S^2
\end{equation}
maps a rotation matrix $R$ to its action on $p_0$, i.e., $\pi(R) = R p_0$. Geometrically, a point in the total space corresponds to a choice of local frame whose $x$-axis is aligned with the point on the sphere.

Let $V$ be a vector space carrying a representation
\begin{equation}
    \rho : \mathrm{SO}(2) \to \mathrm{GL}(V)\,.
\end{equation}
Feature fields on $S^2$ transforming under $\rho$ are sections of the associated bundle
\begin{equation}
    \mathrm{SO}(3) \times_\rho V \to S^2\,.
\end{equation}

To transport features along trajectories on $S^2$, we require a connection on the principal bundle $\mathrm{SO}(3) \to S^2$. Since $S^2$ is a homogeneous space, it is natural to restrict attention to $G$-invariant connections. Recall from~\Cref{sec:Wangs_theorem} that by Wang's theorem, $G$-invariant connections are in one-to-one correspondence with linear maps
\begin{equation}
    \Lambda : \mathfrak{so}(3) \to \mathfrak{so}(2)
\end{equation}
satisfying suitable equivariance and normalisation conditions. In this case, there exists a \emph{unique} $G$-invariant principal connection, corresponding to the reductive decomposition
\begin{equation}
    \mathfrak{so}(3) = \mathfrak{so}(2) \oplus \mathfrak{m},
\end{equation}
where $\mathfrak{m}$ is spanned by generators rotating $p_0$ into the tangent plane of $S^2$.

To proceed, let $\phi : S^2 \to TS^2$ be a (possibly equivariant) vector field defining a NODE on the sphere:
\begin{equation}
    \dot p(t) = \phi(p(t)), \qquad p(0) = p_0 \,.
\end{equation}
Given a horizontal lift $\tilde p(t) \in \mathrm{SO}(3)$ satisfying
\begin{equation}
    \pi(\tilde p(t)) = p(t)\,, \qquad \omega(\dot{\tilde p}(t)) = 0,
\end{equation}
the evolution of a feature vector $v \in V$ is given by parallel transport,
\begin{equation}
    \frac{d}{dt} v(t) = 0, \qquad f(p(t)) = [\tilde p(t), v(t)] \,.
\end{equation}
In local coordinates, this yields a transformation
\begin{equation}
    f(p(t)) = \rho(h(t)) f(p(0))\,,
\end{equation}
where $h(t) \in \mathrm{SO}(2)$ is determined by the connection and the base-space trajectory.

If $V = \mathbb{R}$ and $\rho$ is the trivial representation, the associated bundle reduces to a trivial line bundle and parallel transport is trivial. In this case, the steerable NODE reduces to a standard manifold NODE on $S^2$, recovering the setting of continuous normalizing flows on homogeneous spaces without internal structure. For non-trivial $\rho$, even though the connection is uniquely determined by invariance, the model captures genuinely new behaviour that cannot be expressed using base-space flows alone.

\section{Conclusions}
\label{sec:conclusions}

In this paper, we consider a geometric extension of neural ODEs on homogeneous spaces $M = G/H$, which incorporates features transforming in a representation $\rho$ of the stabiliser subgroup $H$. In this setting, a manifold NODE transports points along integral curves of a learnable vector field. Feature fields, on the other hand, are interpreted as sections of the associated bundle $G \times_\rho V$. We show that the natural mechanism for transporting such features along NODE trajectories is parallel transport with respect to a principal connection on the bundle $G \to G/H$. This leads to~\Cref{dfn:steerable_NODE} of a steerable NODE as a coupled geometric system consisting of a base flow on $G/H$ together with a parallel transport equation steering the feature field through a learnable principal connection.

The resulting construction provides a geometrically consistent notion of feature transport that generalises both the push-forward of tangent vectors and the density evolution appearing in continuous normalizing flows. A central structural result is that equivariance of the steerable NODE with respect to the global $G$-action is guaranteed if both the base vector field and the principal connection are $G$-invariant (\Cref{thm:equivariance_steerable_NODE_flow}). Using Wang’s theorem, we obtain an explicit classification of $G$-invariant principal connections on $G \to G/H$ which, together with our previous results for equivariant manifold NODEs, yields a concrete parametrisation of all admissible equivariant steerable NODE models.

Conceptually, our analysis shows that extending NODEs to general feature fields on homogeneous spaces necessarily introduces principal connections as learnable geometric structures. This reveals a fundamental distinction between scalar transport (e.g., densities) and general vector-valued feature transport: while scalar quantities may be transported canonically, non-scalar features require additional geometric data to define their evolution in a manner consistent with both local and global symmetries.

The framework developed here opens several directions for further research. In this work, we characterise invariant connections in order to ensure global equivariance of the steerable NODE. A natural next step is to investigate the learning dynamics of such connections in practice. In particular, it would be interesting to explore explicit parametrisations of invariant connections via Wang’s theorem for specific Lie groups and to understand the inductive bias induced by different classes of connections. Allowing for non-invariant connections may also be relevant in settings where symmetry is only approximate.

We emphasise that, although the global construction is well-defined, its local realisation depends on a choice of section of the principal bundle. In this sense, the formulation is gauge-dependent at the level of coordinates. A systematic investigation of gauge-equivariance, transition functions between local trivialisations, and numerically stable implementations across charts would further clarify the interplay between geometric structure and practical computation.

For equivariant manifold NODEs, universality results have been established under appropriate assumptions. An important open question is whether steerable NODEs with learnable connections retain comparable approximation properties for spaces of equivariant bundle maps or flows on associated bundles. While homogeneous spaces provide a clean and structured setting, many applications involve manifolds with local but not global homogeneous symmetry. Extending the present framework to more general principal bundles that do not arise from global quotients would significantly broaden its scope.

A particularly intriguing direction is to extend the present framework beyond smooth homogeneous spaces to discrete geometric settings such as combinatorial complexes. In recent years, graph and sheaf neural networks have demonstrated that feature fields on graphs and cell complexes can be modelled as sections of vector bundles \cite{Cassel2025} and cellular sheaves~\cite{Hansen2020,Barbero2022}, where restriction maps encode local consistency constraints analogous to parallel transport. From this perspective, cellular sheaves may be viewed as discrete counterparts of vector bundles, and their restriction maps as a combinatorial analogue of connection data. It would therefore be natural to investigate whether a theory of steerable NODEs can be formulated in this discrete setting, combining continuous-time dynamics on cochains with sheaf-consistent feature evolution. Such a development could provide a unified geometric framework bridging continuous gauge-theoretic models and discrete topological deep learning, and may open new avenues for modelling dynamical processes on graphs and higher-order networks.

In the more applied direction, we also showed that continuous normalizing flows on Lie groups arise as a special case of our construction when the stabiliser subgroup is trivial. This suggests the possibility of combining density evolution with feature transport, leading to gauge-equivariant generative models on homogeneous spaces. Such models may be of particular interest in physics-informed machine learning and geometric generative modelling.

Finally, it would be interesting to explore potential applications within geometric deep learning, such as data defined on spheres and rotation groups. This occurs for instance in molecular modelling (e.g.,~\cite{Yim2023,Bose2024}) or climate science (e.g.,~\cite{Linander2025}), as well as gauge-equivariant models (e.g., \cite{Favoni2022,Gerdes2024}) in lattice gauge theory. In these contexts, steerable NODEs offer a principled mechanism for coupling state evolution and feature transformation in continuous time while respecting symmetry constraints. 

We hope to report on progress on these interesting directions in the future. 

\section*{Acknowledgements}
The work of E.A., D.P., and F.O. was supported by the Wallenberg AI, Autonomous Systems and Software Program (WASP) funded by the Knut and Alice Wallenberg Foundation. The work of F.O. was partially funded by the Swedish Research Council under grant agreement no. 2025-05053. 

\bibliographystyle{unsrtnat}
\bibliography{cites}

\newpage
\appendix

\section{Proof of \Cref{lem:feature_fields}}\label{app:proof_of_equiv_feat_fields}
In~\cref{sec:feature_fields_and_induced_representations} we introduced two definitions of a feature field. Here, we prove the lemma stating that these two definitions are equivalent.
\MackeyLemma*
\begin{proof}
    We start by showing that a local feature field defines a section of the associated bundle. Let $\sigma:G/H\to G$ be a (local) section of the principal bundle $G\overset{\pi}{\to}G/H$, and let $f=k\circ\sigma$, where $k:G\to V$ is a Mackey function (\cref{dfn:mackey_fnc}). Define $s:M \to G\times_\rho V$ by
    \begin{equation}
        s(p) = \left[\sigma(p),f(p)\right] = \left[\sigma(p),k(\sigma(p))\right]\,,
    \end{equation}
    for $p\in M$. The Mackey condition ensures that $s$ is independent of the choice of principal section: if $\sigma'$ is another section with $\sigma'(p) = \sigma(p)h(p)$ for some $h(p)\in H$, then
    \begin{eqnarray*}
        \left[\sigma'(p),k(\sigma'(p))\right] &=& \left[\sigma(p)h(p),k(\sigma(p)h(p))\right] \\
        &=& \left[\sigma(p)h(p),\rho(h(p)^{-1})k(\sigma(p))\right] = \left[\sigma(p),k(\sigma(p))\right]\,.
    \end{eqnarray*}
    Thus $s$ is well-defined, smooth, and independent of the choice of local section $\sigma$.
    
    To show the converse, i.e., that every section $s$ of the associated bundle defines a Mackey function, let $\{U_i\}$ be an open cover of $M=G/H$ and denote by $\sigma_i:U_i\to G$ the sections defined locally on $U_i$, related by the bundle cocycles $c_{ij}:U_i\cap U_j\to H$ satisfying $\sigma_i(p)=\sigma_j(p)c_{ij}(p)$ for all $p\in U_i\cap U_j$. For all $g\in\pi^{-1}(U_i)$, there is a unique element $h_i(g)\in H$ such that $g=\sigma_i(\pi(g))h_i(g)$.

    Let $s:M\to G\times_\rho V$ be a smooth section. Then for each $p\in U_i$
    \begin{equation}\label{eq:smooth_section_local}
        s(p)=[\sigma_i(p),v_i(p)]\,,
    \end{equation}
    where $v_i(p)\in V$. Now, define $k:G\to V$ by
    \begin{equation}
        k(g):=\rho(h_i(g)^{-1})v_i(\pi(g))
    \end{equation}
    for each $g\in\pi^{-1}(U_i)$, where $h_i(g)$ is the unique element satisfying $g=\sigma_i(\pi(g))h_i(g)$. This is, indeed, a Mackey function, since the right action of an element $h\in H$ on $g$ corresponds to a new element in the same fibre:
    \begin{equation*}
        k(gh)=\rho(h^{-1})k(g)\,.
    \end{equation*}
    It is also well-defined on overlaps: If $p\in U_i\cap U_j$,
    \begin{equation}
        s(p)=[\sigma_i(p),v_i(p)]=[\sigma_j(p),v_j(p)]\,,
    \end{equation}
    from which a simple calculation shows that $v_j(p)=\rho(c_{ij}(p)^{-1})v_i(p)$. Furthermore, if $g=\sigma_i(p)h_i(g)$, there is also a unique element $h_j(g)\in H$ satisfying $g=\sigma_j(p)h_j(g)$. A similar calculation yields $h_i(g)=c_{ij}(p)h_j(g)$. It follows that
    \begin{equation}
        \rho(h_j(g)^{-1})v_j(p)=
        \rho(h_i(g)^{-1})\rho(c_{ij}(p))v_j(p)=\rho(h_i(g)^{-1})v_i(p)\,,
    \end{equation}
    showing that $k$ is well-defined on overlaps.

    With the definition of $k$, \eqref{eq:smooth_section_local} can be written as
    \begin{equation}
        s(p)=[\sigma_i(p),k(\sigma_i(p))]\,.
    \end{equation}
    If $p\in U_i\cap U_j$, this can be written as
    \begin{equation}
        s(p)=[\sigma_j(p)(c_{ij}(p))^{-1},k(\sigma_i(p))]=[\sigma_j(p),\rho((c_{ij}(p))^{-1})k(\sigma_i(p))]\,.
    \end{equation}
    Because $k$ is a Mackey function and $\sigma_i(p)c_{ij}(p)=\sigma_j(p)$, it follows that
    \begin{equation}
        s(p)=[\sigma_j(p),k(\sigma_j(p))]\,,
    \end{equation}
    showing that the definition of $k$ is consistent with the transformation behaviour of the section.

    The Mackey function $k$ is smooth on each $\pi^{-1}(U_i)$ because $h_i$, $\sigma_i$ and $v_i$ are smooth. Since they also agree on overlaps, $k$ is a globally smooth function.

    Lastly, note that if two Mackey functions $k$ and $k'$ are defined for the same section, then $k(\sigma_i(p))=k'(\sigma_i(p))$ for all $p\in U_i$. Since every element $g\in\pi^{-1}(U_i)$ satisfies $g=\sigma_i(\pi(g))h_i(g)$ for some unique $h_i(g)\in H$, the Mackey condition yields
    \begin{equation}
    k(g) = \rho(h_i(g)^{-1})k(\sigma_i(p)) = \rho(h_i(g)^{-1})k'(\sigma_i(p))=k'(g).
    \end{equation} 
    Thus, the Mackey function constructed from the section $s$ is unique.
\end{proof}

\section{Proof of Wang's theorem for $G \to G/H$}\label{app:proof_of_wang}
In this appendix, we provide a self-contained proof of~\cref{thm:wang}. The theorem is a special case of the more general result due to Wang~\cite{Wang1958}, in the situation when we consider $G$ as a principal $H$-bundle over the homogeneous space $G/H$. 

\WangTheorem*

\begin{proof}
    First, given a linear map $\Lambda:\mathfrak{g}\to\mathfrak{h}$ satisfying conditions (i) and (ii), we define
    \begin{equation}
        \omega_e:=\Lambda    
    \end{equation}
    and $ \omega\in\Omega^1(G,\mathfrak{h})$ by
    \begin{equation}
        \omega_g:=\left(L_{g^{-1}}\right)^* \omega_e = \left(L_{g^{-1}}\right)^*\Lambda.
    \end{equation}
    For any $a,g\in G$ and $X\in T_gG$, we have
    \begin{eqnarray*}
        \left((L_a)^*\omega\right)_g(X) &=& \omega_{ag}\left((L_a)_*X\right) = \omega_{e}\left(\left(L_{(ag)^{-1}}\right)_*(L_a)_*X\right)\\
        &=& \omega_{e}\left(\left(L_{g^{-1}}\right)_*X\right) = \omega_g(X).
    \end{eqnarray*}
    which shows that $(L_a)^*\omega=\omega$ for all $a\in G$, i.e., $\omega$ is $G$-invariant. Given $A\in\mathfrak{h}$ and $u\in G$, the fundamental vector field $A^\#(u)$ is 
    \begin{equation}
        A^\#(u)=\left.\frac{d}{dt}u\,\exp(tA)\right|_{t=0}=(L_u)_*A.
    \end{equation}
    This implies
    \begin{equation}
        \omega_u(A^\#(u))=\omega_u((L_u)_*A)=\omega_e(A)=\Lambda(A)=A,
    \end{equation}
    with the last equality coming from condition (ii), which shows that $\omega$ satisfies condition (i) in \cref{def:connection_1_form}. Furthermore, with $g\in G$, $h\in H$, and $X\in T_gG$, we have
    \begin{eqnarray*}
        \left(\left(R_h\right)^*\omega\right)_g(X) &=& \omega_{gh}\left((R_h)_*X\right) = \omega_e\left(\left(L_{(gh)^{-1}}\right)_*(R_h)_*X\right)\\
        &=& \Lambda(\left(L_{h^{-1}}\right)_*\left(L_{g^{-1}}\right)_*(R_h)_*X) = \Lambda(\left(L_{h^{-1}}\right)_*(R_h)_*\left(L_{g^{-1}}\right)_*X)\\
        &=& \Lambda(\mathrm{Ad}_{h^{-1}}\circ\left(L_{g^{-1}}\right)_*X) = \mathrm{Ad}_{h^{-1}}\circ\Lambda(\left(L_{g^{-1}}\right)_*X)\\
        &=& \mathrm{Ad}_{h^{-1}}\circ\omega_g(X).
    \end{eqnarray*}
    where we have used condition (i). Consequently, $\omega$ also satisfies condition (ii) in~\cref{def:connection_1_form}, making it a $G$-invariant Ehresmann connection on $G$.
    
    Conversely, given a $G$-invariant Ehresmann connection $\omega \in \Omega^1(G,\mathfrak{h})$ we define a map $\Lambda:\mathfrak{g}\to \mathfrak{h}$ by
    \begin{equation}
        \Lambda:=\omega_e.
    \end{equation}
    Linearity of $\Lambda$ follows from the linearity of $\omega$. We note that $\omega_g(X) = \omega_e((L_{g^{-1}})_*X)$ for all $g\in G$ and $X\in T_gG$ since $\omega$ is $G$-invariant. Combined with condition (ii) in \cref{def:connection_1_form}, this implies
    \begin{eqnarray*}
        \mathrm{Ad}_h\circ\Lambda(X) &=& \mathrm{Ad}_h\circ\omega_e(X) = ((R_{h^{-1}})^*\omega)_e(X) = \omega_{h^{-1}}((R_{h^{-1}})_*X)\\
        &=& \omega_{e}((L_h)_*(R_{h^{-1}})_*X) = \omega_{e}(\mathrm{Ad}_hX) = \Lambda\circ\mathrm{Ad}_h(X) \,,
    \end{eqnarray*}
    for any $h\in H$ and $X\in\mathfrak{g}$, i.e., $\Lambda$ satisfies (i). Furthermore, for any $A \in H$, we have $A=A^\#(e)$ and
    \begin{equation}
        \Lambda(A)=\omega_e(A)=\omega_e(A^\#(e))=A.
    \end{equation}
    which implies that $\Lambda$ also satisfies condition (ii).

    Finally, we verify that the correspondence described above is one-to-one. Given the linear map $\Lambda:\mathfrak{g}\to\mathfrak{h}$ satisfying conditions (i) and (ii), we constructed $\omega$ by $\omega_g:=(L_{g^{-1}})^*\Lambda$. In particular, $\Lambda=(L_{e^{-1}})^*\Lambda=\omega_e$. Conversely, if $\omega$ is a $G$-invariant connection and we define $\Lambda:=\omega_e$, then by $G$-invariance
    \begin{equation}
        \omega_g(X) = \omega_e((L_{g^{-1}})_*X) = \Lambda((L_{g^{-1}})_*X) = (L_{g^{-1}})^*\Lambda(X) \,,
    \end{equation}
    for all $g\in G$ and $X\in T_gG$. We conclude that the correspondence of the two constructions is one-to-one.
\end{proof}
\end{document}